\documentclass{article}
\usepackage{iclr2027_conference,times}

\usepackage[dvipsnames,table,xcdraw]{xcolor}

\usepackage{amsmath,amsfonts,bm}

\def\eqref#1{equation~\ref{#1}}
\def\1{\bm{1}}

\DeclareMathAlphabet{\mathsfit}{\encodingdefault}{\sfdefault}{m}{sl}
\SetMathAlphabet{\mathsfit}{bold}{\encodingdefault}{\sfdefault}{bx}{n}

\colorlet{LightRubineRed}{RubineRed!60!white}
\usepackage[colorlinks=true, linkcolor=LightRubineRed, citecolor=LimeGreen, urlcolor=Cyan]{hyperref}
\usepackage{url}
\usepackage{subcaption}
\usepackage{graphicx}
\usepackage{booktabs}
\usepackage{makecell}
\usepackage{algorithm}
\usepackage{algpseudocode}
\usepackage{xspace}
\usepackage{soul}

\usepackage{amsthm}
\newtheorem{theorem}{Theorem}
\newtheorem{proposition}[theorem]{Proposition}

\title{\centering Variational Atlas Autoencoders\\ and Self-Supervised Chart Selection}

\author{\parbox{\dimexpr\textwidth-2\tabcolsep}{\centering
    Arno Coomans$^{1,2}$ \quad
    Merlin Nimier-David$^{1}$ \quad
    Eugene d'Eon$^{1}$ \quad
    Boris Bonev$^{1}$ \\[0.6em]
    \textnormal{$^{1}$NVIDIA \quad $^{2}$Graz University of Technology, Austria}
}}

\newcommand{\methodnamesingular}{Variational Atlas Autoencoder\xspace}
\newcommand{\methodname}{Variational Atlas Autoencoders\xspace}

\newcommand{\x}{\mathbf{x}}
\newcommand{\z}{\mathbf{z}}

\iclrfinalcopy

\begin{document}

\maketitle

\begin{abstract}

Complex, high-dimensional data lie on manifolds that a single global latent coordinate system cannot represent without distortion.
We introduce \methodname, a mixture-of-charts framework that learns an atlas of local latent representations, each responsible for one region of the data manifold.
The central difficulty in learning an atlas is deciding which chart should be responsible for which data point without labels.
We show that this decision need not be a heuristic. By treating the chart index as a latent variable, the evidence lower bound yields the training objective and casts chart assignment as posterior inference, valid for any routing distribution.
Within this framework, we propose \emph{latent-evidence routing}, which assigns a point to a chart in proportion to the evidence of its encoding under that chart's latent prior.
This has a simple interpretation: the closer a point lands to the center of a chart's latent space, and the more confidently it is placed there, the more likely it is to belong to that chart.
Chart assignment thus emerges as posterior inference, depends on the encoder alone, and is fully self-supervised, so the charts discover structure in the data on their own, separating classes and subclasses such as distinct types of shoes or faces without access to labels.
We further recover an atlas from the trained charts by identifying their overlaps and transition maps, enabling continuous interpolation within and across charts that stays close to the data manifold rather than traversing unsupported regions of latent space.
Compared to capacity-matched baselines, \methodname yields better-structured latent spaces and higher sample quality on image datasets including CelebA, while sampling requires evaluating only a single chart's decoder.
\end{abstract}

\section{Introduction}
\label{sec:intro}

Latent spaces learned by autoencoders underpin much of modern generative modeling, from latent diffusion~\citep{rombach2022high, vahdat2021score} to manifold-learning flows~\citep{brehmer2020flows}.
These models are only as good as the learned coordinate system they are expressed in, and those coordinates almost always form a single global representation.  Forcing all data into one representation is a design choice inherited from the autoencoder, not a property of the data.

Under the manifold hypothesis, data concentrate near a $k$-dimensional manifold $\mathcal{M} \subset \mathbb{R}^d$ with $k \ll d$~\citep{fefferman2016testing, brown2022verifying}, and an autoencoder can be viewed as an approximate \emph{chart} of $\mathcal{M}$: the encoder assigns coordinates to each data point, and the decoder maps them back.
However, a single chart cannot cover every manifold without tearing or distortion. For example, a sphere cannot be flattened onto a plane, and data made of separate clusters, such as distinct classes, cannot be covered by one continuous latent space without filling the gaps between them.
Autoencoders thus resort to non-smooth latent spaces whose interpolants decode to implausible samples~\citep{ArvanitidisHH18, interp_oring21a}.
This is unavoidable for disconnected data: the image of a connected latent space is connected, so it must cross the gaps between components.

Differential geometry resolves this with an \emph{atlas}: charts whose domains cover $\mathcal{M}$, overlap, and are related by transition maps.
Chart autoencoders~\citep{schonsheck2019chart} and mixtures of VAEs~\citep{ye2021deep, alberti2024manifold} move in this direction by giving each component its own encoder--decoder pair, but leave two problems open.

\begin{figure}[t]
    \centering
    \includegraphics[width=\linewidth]{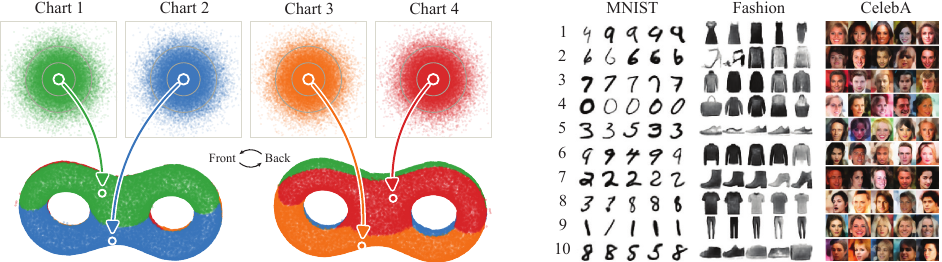}
    \caption{
        \methodname specialize to meaningful subsets of the data manifold with no label supervision.
        \textbf{Left}: a genus-2 surface covered by four $2$D charts, colored by the chart claiming each region, shows that multiple charts avoid the distortion of a single global 2D latent space.
        \textbf{Right}: generated samples per chart on three image datasets.
        On MNIST~\citep{lecun1998gradient} and Fashion-MNIST~\citep{fashionMNIST} ($16$-D latents), a chart typically covers one class or a related pair; on CelebA~\citep{liu2015faceattributes} ($128$-D latents), charts split by facial features, illumination, and background.
    }
    \label{fig:teaser}
\end{figure}

Without labels, the model must decide the \textit{responsibility} of each chart for each data point on the manifold.
We make this decision part of the probabilistic model by treating the chart index as a latent variable.
Responsibility then enters the ELBO as the variational posterior $q(c|x)$, which we call the routing function, and since the bound holds for any choice of $q(c|x)$, the question becomes which router trains best.
Routing by each chart's 
ELBO~\citep{alberti2024manifold} ties assignment to how well a chart already reconstructs a point, which can favour charts that improve early.
On the other hand, separate latent spaces use disjoint coordinate systems, so interpolation is undefined across charts, limiting the \textit{connectivity} of the model.
Without a way to transition between charts, the model cannot interpolate between arbitrary points on the data manifold, hindering downstream applications expressed in this coordinate system, such as latent-space diffusion.

\paragraph{Contributions.}
\methodname address both problems.
(i)~We formulate a variational mixture of charts, each a complete VAE, in which the training objective follows from the ELBO.
(ii)~We propose \emph{latent-evidence routing}, which assigns a point to a chart in proportion to the evidence of its encoding under that chart's prior: the closer the encoding lies to the chart's centre, and the more confident it is, the higher its responsibility.
It depends on the encoders alone, meaning it does not require reconstructions for routing.
It discovers classes and subclasses without labels, such as sandals and heeled shoes in Fashion-MNIST.
(iii)~We turn the trained charts into an atlas by fitting overlaps (\emph{portals}) where the router ties and building a chart transition graph, enabling interpolation within and across charts that stays close to the data manifold.
(iv)~On MNIST, Fashion-MNIST, and CelebA, our method achieves the best sample FID among parameter-matched VAE and VaDE baselines, while sampling requires only a single (smaller) decoder pass.

\section{Related Work}

\paragraph{Manifold learning.}
The manifold hypothesis posits that high-dimensional data concentrate near a manifold of much lower intrinsic dimension.
\citet{fefferman2016testing} provide a rigorous framework for testing this hypothesis for sufficiently smooth manifolds.
For image data, \citet{brown2022verifying} provide empirical evidence for a more general union-of-manifolds hypothesis: data concentrate near disjoint manifolds with potentially varying intrinsic dimensions.
They further show that incorporating this structure into model design improves classification and generation.

These observations suggest that using a single continuous latent space is generally insufficient to represent the data manifold.
\citet{tanielian2020learning} quantify this phenomenon for GANs: covering separated data components involves generating low-quality samples in the gaps between them.
\citet{batson2021topological} demonstrate the same issue in the context of anomaly detection, giving low-dimensional examples where topological mismatch leads to reconstruction errors for otherwise ordinary data points.
Our use of multiple charts allows separate representations of disconnected components and local coordinates within components whose topology precludes a single global chart, with transition maps relating coordinates on overlaps.

\paragraph{Latent geometry and interpolation.}
Geometry also matters for density estimation and interpolation.
Manifold-learning flows \citep{brehmer2020flows} jointly learn an embedded manifold and a tractable density on it, while \citet{arvanitidis2020geometrically} equip the ambient space with a Riemannian metric and pull it back through the generator to construct more meaningful interpolation paths.
Together, these works motivate separating local parameterization from global structure.
Although we do not aim to generate the shortest interpolation paths, our charts are well-suited to both intra-chart and cross-chart interpolation along straight lines thanks to their local coverage of the data manifold.

\paragraph{Multi-chart and mixture models.}
Mixtures of local linear latent-variable models \citep{ghahramani1996em,tipping1999mixtures} are the classical precursor to our model, with mixture responsibilities in the role of the routing posterior.
\citet{teh2003automatic} and \citet{brand2003charting} align such local models into one global embedding, whereas we keep the local coordinates and learn the transition maps between them (\autoref{eq:transition}).
 
In the nonlinear (deep) setting, variational deep embeddings (VaDE) \citep{jiang2016variational} place a Gaussian mixture prior on the latent space of a single VAE and read the cluster assignment from the mixture responsibilities.
Since all components share the encoder and decoder, this is still a single chart, which cannot cover most non-trivial data manifolds (\autoref{app:onechart}).
Mixtures of VAEs with a separate encoder--decoder pair per component \citep{ye2021deep,alberti2024manifold} remove this restriction and are the closest generative models to ours.
To route data to a specific chart, \citet{alberti2024manifold} use the per-component ELBO.
We show that this couples assignment to reconstruction quality and starves charts that start out worse, and we replace it with latent-evidence routing, which depends on the encoder alone.
Neither work constructs overlaps or transition maps between components.

The chart auto-encoder (CAE) of \citet{schonsheck2019chart} uses several deterministic encoder--decoder pairs and shows that a single simply connected latent space cannot faithfully represent a manifold with non-contractible topology.
Charts are assigned by a separate network trained toward a softmax over per-chart reconstruction errors, and transition maps compose one chart's decoder with another's encoder, but the overlaps on which they should hold are never constructed.
\citet{stolberglarsen2021atlas} unify such multi-chart models as \emph{atlas generative models} and interpolate across charts on a graph over encoded data points.
Two charts are linked only at individual data points whose routing exceeds a threshold in both, by connecting that point's two encodings.
We differ in three respects: our chart assignment follows from the ELBO (\autoref{eq:multichart-elbo}) and is computed from the chart encoders rather than a separate network; overlaps are portals, distributions over a chart's latent space located where the router ties, rather than individual data points; and each portal comes with an explicit transition map, so charts can be crossed from anywhere within it (\autoref{sec:portals}).

\section{Methodology}
\label{sec:methodology}

The overall goal of a generative model is to represent a data distribution $p(\x)$.
In a latent-variable model, the distribution is often represented using a single continuous latent space with a simple prior, typically a multivariate standard normal distribution $\z \sim \mathcal{N}(\mathbf{0},\mathbf{I})$.
The encoder must therefore represent all samples $\x$ from the data distribution within a single latent space, even when the data lie on a highly nonlinear or disconnected manifold.
Our proposed solution is to model the data distribution using multiple latent representations. Each latent component can specialize in a particular subset, mode, or locally coherent region of the data manifold.

\begin{figure}[t]
    \centering
    \includegraphics[width=\linewidth]{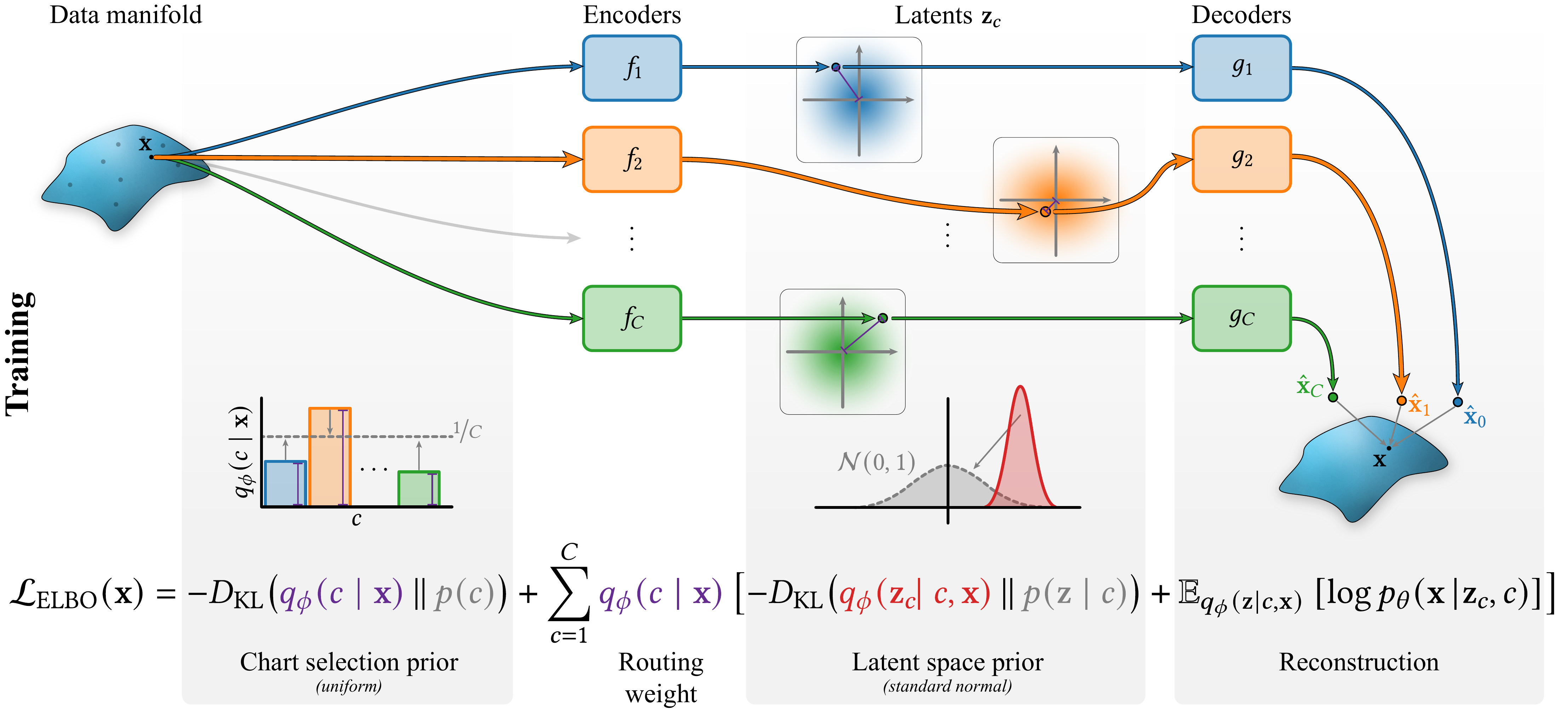}
    \hfill\rule{0.985 \textwidth}{0.4pt}\hfill
    \vspace{0.4em}
    \includegraphics[width=\linewidth]{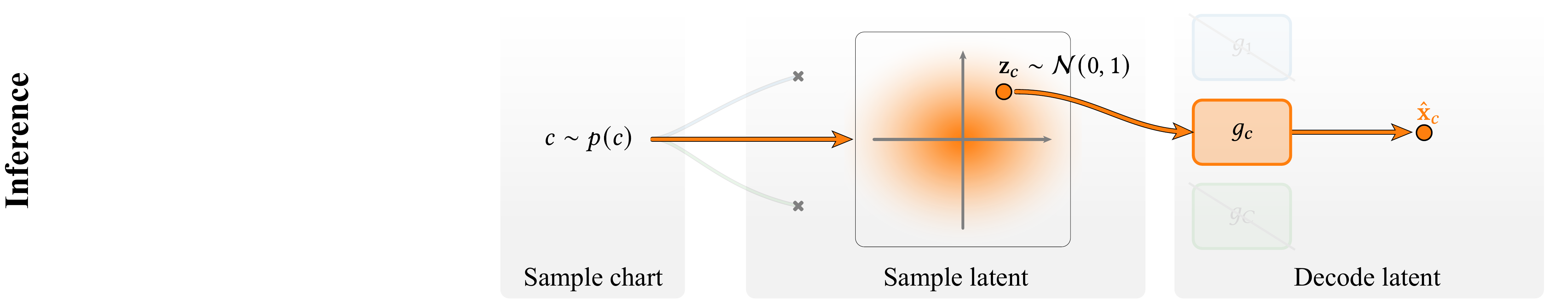}
    \caption{
        \textbf{Top}: we train $C$ encoder--decoder pairs to model the data manifold using independent charts.
        Our training objective, the ELBO, decomposes into per-chart losses weighted by a routing probability.
        The KL divergence on the chart selection prior encourages all charts to be used with equal probability.
        \textbf{Bottom}: sampling from a chart is then as simple as selecting a chart $c$ uniformly at random, sampling a latent $\mathbf{z}_c$ from the standard normal prior, and decoding it.
        Moment matching can optionally be added, see \autoref{sec:sampling}.
    }
    \label{fig:method}
\end{figure}

\subsection{Variational Chart Selection}
\label{sec:variational-chart-selection}

To model $p_\theta(\x)$ using multiple charts, we introduce a discrete chart variable $c\in\{1,\ldots,C\}$. As a modeling assumption, each chart is assigned equal prior probability, so that $p(c)=1/C$. Each chart is further equipped with a continuous $L$-dimensional latent variable $\z_c\in\mathbb{R}^L$ with conditional prior $p(\z_c\mid c)=\mathcal{N}(\mathbf{0},\mathbf{I})$. The resulting modeled data distribution is then given by:
\begin{equation} \label{eq:data-distr-charts}
    p_\theta(\x)
    =
    \sum_{c=1}^{C} p(c)
    \int
    p_\theta(\x\mid\z_c,c)\,
    p(\z_c\mid c)\,
    \mathrm{d}\z_c .
\end{equation}
Given an observation $\x$, inference requires the posterior distribution over the chart assignment $p_\theta(c\mid\x)$ and the chart-specific latent posterior $p_\theta(\z_c\mid c,\x)$. Since these posterior distributions are generally intractable, we introduce the variational approximations $q_\phi(c\mid\x)$ and $q_\phi(\z_c\mid c,\x)$. Together, they define the approximate posterior
\begin{equation} \label{eq:approximate-posterior}
    q_\phi(c,\z_c\mid\x)
    =
    q_\phi(c\mid\x)\,
    q_\phi(\z_c\mid c,\x),
\end{equation}
which approximates the true posterior $p_\theta(c,\z_c\mid\x)$.
Each chart $c$ is realized by an encoder $f_c(\x) = (\boldsymbol{\mu}_c(\x), \boldsymbol{\Sigma}_c(\x))$, which parameterizes the diagonal Gaussian $q_\phi(\z_c\mid c,\x) = \mathcal{N}(\boldsymbol{\mu}_c(\x), \boldsymbol{\Sigma}_c(\x))$, and a decoder $g_c$, whose output $\hat{\x} = g_c(\z_c)$ is the mean of $p_\theta(\x\mid\z_c,c)$.
Throughout, $\phi$ and $\theta$ denote the parameters of all encoders and decoders, respectively.
Direct evaluation and optimization of the marginal log-likelihood in \autoref{eq:data-distr-charts} are generally intractable because the integral over the continuous latent variables cannot usually be computed in closed form. As in the single-chart VAE setting~\citep{kingma2013auto}, we therefore optimize a variational lower bound on $\log p_\theta(\x)$,
\begin{equation}
    \log p_\theta(\x)
    \geq
    \mathcal{L}_{\mathrm{ELBO}}(\x)
    =
    \mathbb{E}_{q_\phi(c,\z_c\mid\x)}
    \left[
        \log
        \frac{
            p_\theta(\x,\z_c,c)
        }{
            q_\phi(c,\z_c\mid\x)
        }
    \right].
\end{equation}
Relating the lower bound to the exact posterior and expressing terms as KL divergences, we obtain
\begin{align}
\label{eq:multichart-elbo}
\mathcal{L}_{\mathrm{ELBO}}(\mathbf{x})
=
& -D_{\mathrm{KL}}\!\left(
    q_\phi(c\mid\mathbf{x})
    \,\Vert\,
    p(c)
\right) \\
& + \sum_{c=1}^{C}
    q_\phi(c\mid\mathbf{x}) \left[
        \mathbb{E}_{q_\phi(\mathbf{z}_c \mid c,\mathbf{x})}
        \left[
            \log p_\theta(\mathbf{x} \mid \mathbf{z}_c, c)
        \right]
        -
        D_{\mathrm{KL}}\!\left(
            q_\phi(\mathbf{z}_c\mid c,\mathbf{x})
            \,\Vert\,
            p(\mathbf{z}_c\mid c)
        \right)
    \right]. \nonumber
\end{align}
Writing the chart router as an explicit finite sum makes it clear that the per-chart ELBO
\begin{equation}
\label{eq:per-chart-elbo}
\mathcal{L}_c(\x) =
    \mathbb{E}_{q_\phi(\z_c \mid c, \x)}\!\left[
        \log p_\theta(\x \mid \z_c, c)
    \right]
    - D_{\mathrm{KL}}\!\left(
        q_\phi(\z_c \mid c, \x) \,\|\, p(\z_c \mid c)
    \right)
\end{equation}
is simply weighted by each chart's routing probability $q_\phi(c\mid\x)$, and the prior routing term
$D_{\mathrm{KL}}(q_\phi(c\mid\x)\,\Vert\,p(c))$ regularizes the routing function.
The role of each term is visualized in \autoref{fig:method}.
In practice, we minimize the weighted loss $\ell(\x)$ of \autoref{eq:weighted-loss}, in which $\mathcal{L}_{\mathrm{recon}}(\x, g_c(\z_c))$ is either the reconstruction loss $-\log p_\theta(\x\mid\z_c,c)$ or a perceptual loss, and the two KL terms are weighted by hyperparameters $\beta$ and $\gamma$ (\autoref{app:full-derivation}, \autoref{app:experimental-setup}).

\begin{figure}[t]
    \centering
    \includegraphics[width=\linewidth]{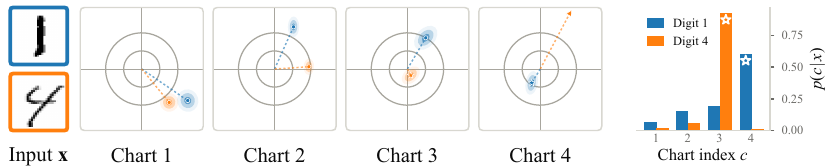}
    \caption{Illustration of latent-evidence routing. Two inputs (left) are encoded by every chart; the evidence of each encoded point under its chart's latent prior induces a distribution over charts, which is used for routing. For the histogram (right), the temperature parameter was artificially raised to $T=10$ to make the distribution easier to visualize.}
    \label{fig:routing-function}
\end{figure}

The bound in \autoref{eq:multichart-elbo} holds for \emph{any} routing distribution $q_\phi(c \mid \mathbf{x})$, which leaves the choice of router as a design decision.

\paragraph{ELBO routing.}
A natural choice is to route by the evidence,
$
q_\phi(c \mid \mathbf{x})
    \propto p(c)\exp\!\left(
        \mathcal{L}_c(\mathbf{x}) / T
    \right)
$,
where $T$ is a temperature parameter that anneals the chart assignment.
Substituting this router into \autoref{eq:multichart-elbo} collapses the bound into
$
\mathcal{L}_\mathrm{ELBO}(\mathbf{x})
    = T \log \sum_c p(c) \exp\!\left(
        \mathcal{L}_c(\mathbf{x}) / T
    \right)
$,
which is tight at $T=1$.
We refer to this variant as \emph{ELBO routing}, following related work~\citep{alberti2024manifold}.

\paragraph{Latent-evidence routing.}
Instead, routing can be based on the latent prior alone, without involving the decoder.
We propose \emph{latent-evidence routing}, guided by the intuition that a chart's routing probability should be proportional to the likelihood of the encoded point under that chart's Gaussian prior.
All variants below take the form $q_\phi(c \mid \mathbf{x}) \propto p(c)\exp\!\left(s_c(\mathbf{x})/T\right)$, a softmax over a per-chart score $s_c$ with temperature $T$, and differ only in how $s_c$ accounts for the posterior uncertainty $\boldsymbol{\Sigma}_c(\mathbf{x}) = \operatorname{diag}(\sigma_{c,1}^2, \dots, \sigma_{c,L}^2)$. \autoref{fig:routing-function} provides a visual overview of latent-evidence routing.

The simplest score evaluates the prior at the posterior mean, $s_c(\mathbf{x}) = \log p(\mathbf{z}_c = \boldsymbol{\mu}_c(\mathbf{x}) \mid c)$, ignoring the posterior uncertainty entirely. We refer to this as \emph{$\mu$-based routing}.
Alternatively, the \emph{expected log-prior} score averages the log-prior over the posterior,
\begin{align}
    s_c(\mathbf{x})
    = \mathbb{E}_{q_\phi(\mathbf{z}_c \mid c, \mathbf{x})}\!\left[\log p(\mathbf{z}_c \mid c)\right]
    = -\frac{1}{2}\left(\|\boldsymbol{\mu}_c\|^2 + \operatorname{tr}\boldsymbol{\Sigma}_c\right) + \mathrm{const}.
    \label{eq:expected-log-prior-routing}
\end{align}
A single-sample Monte Carlo estimate of this expectation recovers the \emph{resampling-based routing}, at the cost of additional variance in the routing weights.

Finally, we propose \emph{prior-marginal routing}, which scores each chart by the prior mass its posterior covers, i.e.\ the evidence of the posterior under the chart's latent prior,
\begin{align}
    s_c(\mathbf{x})
    = \log \int q_\phi(\mathbf{z}_c \mid c, \mathbf{x})\, p(\mathbf{z}_c \mid c)\, \mathrm{d}\mathbf{z}_c
    = -\frac{1}{2}\left(\|\boldsymbol{\mu}_c\|^2_{(\mathbf{I}+\boldsymbol{\Sigma}_c)^{-1}} + \log\det(\mathbf{I} + \boldsymbol{\Sigma}_c)\right) + \mathrm{const},
    \label{eq:prior-marginal-routing}
\end{align}
where $\|\boldsymbol{\mu}_c\|^2_{(\mathbf{I}+\boldsymbol{\Sigma}_c)^{-1}} = \boldsymbol{\mu}_c^\top (\mathbf{I}+\boldsymbol{\Sigma}_c)^{-1} \boldsymbol{\mu}_c$.
This is the marginal likelihood $\mathcal{N}(\boldsymbol{\mu}_c(\mathbf{x});\, \mathbf{0},\, \mathbf{I} + \boldsymbol{\Sigma}_c(\mathbf{x}))$ of the encoder output regarded as a noisy observation of a latent $\mathbf{z}_c \sim p(\mathbf{z}_c \mid c)$ with observation noise $\boldsymbol{\Sigma}_c(\mathbf{x})$.
All three scores are available in closed form for diagonal Gaussian posteriors.
Whereas ELBO routing measures the evidence of $\mathbf{x}$ in data space and therefore depends on the decoder, $s_c$ measures the evidence of the encoded point in latent space and depends on the encoder only.
It formalizes the latent-centrality intuition of \autoref{sec:intro}: a point belongs to the chart whose posterior places it closest to the center of the chart's latent domain.

The three scores differ only in how they treat uncertainty, and in the limit $\boldsymbol{\Sigma}_c \to \mathbf{0}$, all of them coincide with $\mu$-based routing.
The expected log-prior (\autoref{eq:expected-log-prior-routing}) is a lower bound on the prior marginal by Jensen's inequality, but it penalizes the full Euclidean distance $\|\boldsymbol{\mu}_c\|^2$ regardless of the posterior spread and merely adds a variance penalty $\operatorname{tr}\boldsymbol{\Sigma}_c$. The prior marginal of~\autoref{eq:prior-marginal-routing} instead measures this distance in the Mahalanobis norm of $\mathbf{I} + \boldsymbol{\Sigma}_c$, so that uncertain encodings are penalized less.

\paragraph{Impact on training.}
The choice of router also affects training: since \autoref{eq:multichart-elbo} is a valid lower bound for any $q_\phi(c \mid \mathbf{x})$, we substitute the router in and maximize $\mathcal{L}_\mathrm{ELBO}$ directly, so the encoders receive gradients through both the per-chart bounds $\mathcal{L}_c$ and the routing weights.
Under ELBO routing, a chart's weight is determined by how well it already reconstructs $\mathbf{x}$. Charts that start out worse receive smaller weights, hence smaller gradients, and fall further behind.
Latent-evidence routing removes this feedback loop by assigning points according to where they land in latent space rather than how well they are reconstructed, and the KL term in \autoref{eq:multichart-elbo} keeps the routing weights from collapsing onto a single chart.
We adopt prior-marginal routing (\autoref{eq:prior-marginal-routing}) as our default due to its robustness and report ablation results for all variants in \autoref{tab:routing_ablation_table} in the appendix.

\subsection{Sampling}
\label{sec:sampling}

\paragraph{Moment matching (optional).}
After training, the aggregate posterior $q_\phi(\mathbf{z}_c \mid c) = \mathbb{E}_{p(\mathbf{x})}\, q_\phi(\mathbf{z}_c \mid c, \mathbf{x})$ of a chart need not coincide exactly with its prior $\mathcal{N}(\mathbf{0}, \mathbf{I})$, and sampling from regions of the prior that carry no posterior mass produces poor samples~\citep{hoffman2016elbo}.
We therefore optionally replace the prior by the moment-matched aggregate posterior $\tilde{p}(\mathbf{z}_c \mid c) := \mathcal{N}(\mathbf{m}_c, \mathbf{S}_c)$, whose mean and covariance are those of $q_\phi(\mathbf{z}_c \mid c)$ estimated once over the training set. Without moment matching, we use $\tilde{p} = p$.
The chart prior is treated likewise: the uniform $p(c) = 1/C$ is replaced by the empirical routing mass $\hat{p}(c) = \mathbb{E}_{p(\mathbf{x})}\, q_\phi(c \mid \mathbf{x})$, so that charts claiming more of the training data are sampled more often.
Both corrections are cheap, require no retraining, and are applied in all sampling experiments unless otherwise stated.

\paragraph{Unconditional sampling.}
Generation follows Algorithm~\ref{alg:sampling}: draw a chart $c \sim \hat{p}(c)$ and a latent $\mathbf{z}_c \sim \tilde{p}(\mathbf{z}_c \mid c)$, and decode $\hat{\mathbf{x}} = g_c(\mathbf{z}_c)$.
Only the selected decoder is evaluated, so the sampling cost is that of a single VAE with the same backbone.
Sampling within one chart yields samples from the corresponding region of the data manifold (\autoref{fig:samples}), which makes the chart index an interpretable, label-free handle on the generated content.

\paragraph{Encoding.}
In contrast, encoding a data point requires all $C$ encoders to evaluate the routing weights $q_\phi(c \mid \mathbf{x})$.
For applications where encoding cost matters, one may train a lightweight classifier to predict the routing distribution so that only the relevant encoder is evaluated.

\subsection{Overlaps, Transition Maps, and Portals}
\label{sec:portals}
An atlas on a manifold is a family of charts whose domains cover the manifold and whose \emph{transition maps} are homeomorphisms on the overlaps (\autoref{app:manifolds}).
The model described in \autoref{sec:variational-chart-selection} supplies coordinate maps, but the routing posterior $q_\phi(c\mid\mathbf{x})$ induces a \emph{partition} of the data, and a partition has no overlaps.
Here, we describe how overlaps and transition maps between charts can be constructed to recover an atlas and thereby enable movement from any point on the manifold to any other.

\paragraph{Transition map.}
The decoder $g_c$ and the encoder mean $\boldsymbol{\mu}_c$ play the roles of the parametrization and the coordinate map of chart $c$ (\autoref{app:autoencoders}), so the transition map from chart $a$ to chart $b$ and its round trip are
\begin{equation}
    \psi_{ab}(\mathbf{z}) = \boldsymbol{\mu}_b\bigl(g_a(\mathbf{z})\bigr), \qquad R_{ab} = \psi_{ba}\circ\psi_{ab}.
    \label{eq:transition}
\end{equation}
The atlas condition on the overlap is $R_{ab} = \mathrm{id}$. We measure latent distances in the Mahalanobis metric of the sampling distribution $\tilde{p}(\mathbf{z}_c \mid c) = \mathcal{N}(\mathbf{m}_c, \mathbf{S}_c)$ of \autoref{sec:sampling}, $\|\mathbf{u}\|_c^2 = \tfrac{1}{L}\,\mathbf{u}^\top\mathbf{S}_c^{-1}\mathbf{u}$, normalised so that a typical latent of chart $c$ has $\|\boldsymbol{\mu}_c(\mathbf{x}) - \mathbf{m}_c\|_c^2\approx 1$.

\paragraph{Portals as variational overlaps.}
An overlap is a region of the data manifold that \emph{both} charts claim, and the router $q_\phi(c\mid\mathbf{x})$ is the model's own statement of which chart claims a point.
We therefore represent the overlap of the ordered pair $(a,b)$ by a Gaussian $q_{ab}(\mathbf{z}) = \mathcal{N}(\mathbf{m}_{ab}, \mathbf{S}_{ab})$ on the latent space of chart $a$, with $\mathbf{S}_{ab}$ parameterized by its Cholesky factor, fitted to lie where the router assigns equal probability to $a$ and $b$ (a tie).
We refer to these overlaps as ``portals'', as they will be used to transition across charts.
The encoders' and decoders' weights are kept frozen while optimizing portal locations.
Writing $\mathbf{x} = g_a(\mathbf{z})$, $(R_{ab})_\# q_{ab}$ for the pushforward of $q_{ab}$ under the round trip $R_{ab}$,
and $\mathrm{mm}(\cdot)$ for the Gaussian with the same mean and covariance as its argument, $q_{ab}$ minimizes
\begin{equation}
  \mathcal{L}_\text{portal}(q_{ab}) =
  -\mathbb{E}_{q_{ab}}\!\left[\tfrac12\bigl(\log q_\phi(a\mid\mathbf{x}) + \log q_\phi(b\mid\mathbf{x})\bigr)\right]
  + \lambda\,D_{\mathrm{KL}}\!\Bigl(\mathrm{mm}\bigl((R_{ab})_\# q_{ab}\bigr)\,\Big\|\,q_{ab}\Bigr).
  \label{eq:portal-objective}
\end{equation}
The first term is minimized at an exact tie $q_\phi(a\mid\mathbf{x}) = q_\phi(b\mid\mathbf{x}) = \tfrac12$, since the two probabilities sum to at most one, and on its own already locates the overlap. The second term favours the part of the tie set on which the round trip preserves $q_{ab}$ as a distribution. This on its own does not enforce pointwise invertibility of $\psi_{ab}$, but the defect can be repaired by an optional post-processing step proposed in \autoref{app:roundtrip_error}.
Finally, expectations are estimated by reparameterized Monte Carlo, and each fit is initialized at the data point where the router most nearly ties between the two charts. This differs from \citet{stolberglarsen2021atlas}, who cross charts directly at individual data points whose routing exceeds a threshold in both charts, so that the correspondence between charts exists only at those points. A portal instead is a region of the overlap, and the transition map of \autoref{eq:transition} lets charts be crossed from anywhere within it.

\begin{figure}[t]
    \centering
    \includegraphics[width=\linewidth]{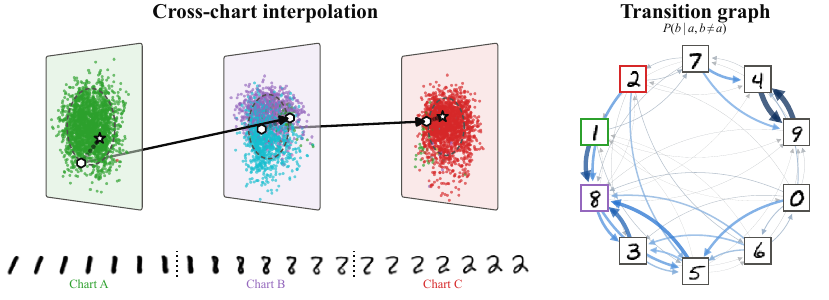}
    \caption{
        An atlas of chart autoencoders can create smooth latent interpolation within each chart as well as across multiple charts.
        After training, we precompute a transition graph (right).
        Interpolation between any two examples in any two charts can then be achieved by following the shortest path in that graph.
        Within each chart's latent space, we interpolate linearly (left).
    }
    \label{fig:interpolation}
\end{figure}

\paragraph{Transition measure and chart graph.}
To interpolate between points in different charts, we need to know which charts are adjacent. We use the model's own generative process to determine this. A sample drawn from chart $a$ is routed by $q_\phi$ to some chart $b$ with some probability which defines the Markov kernel
\begin{equation}
P(b\mid a) = \int q_\phi\bigl(b \mid g_{a}(\mathbf{z})\bigr) \ \tilde{p}(\mathbf{z}\mid a) \ d\mathbf{z}.
\label{eq:kernel}
\end{equation}
We estimate this probability with Monte Carlo. The kernel is row-stochastic, and its diagonal is the probability of staying in chart $a$. Moreover, it is generally asymmetric, since $P(b\mid a)$ and $P(a\mid b)$ involve different decoders. The charts and the off-diagonal entries of $P$ form a directed graph, the chart graph, with an edge $a\to b$ of weight $P(b\mid a)$ (\autoref{fig:interpolation}, right). Its integrand is the same routing probability whose tie defines the portal $q_{ab}$ in \autoref{eq:portal-objective}, so every edge of the graph comes with a portal through which it can be traversed.

\paragraph{Interpolation.}
With the transition kernel (\autoref{eq:kernel}) precomputed, interpolating between points in charts $a$ and $b$ reduces to a shortest-path problem on the chart graph with edge costs $-\log P(b\mid a)$. Since costs are additive, a chart sequence $a = c_0 \to c_1 \to \cdots \to c_K = b$ has cost $-\sum_k \log P(c_{k+1}\mid c_k)$, which corresponds to the negative log-probability of that sequence under the Markov chain defined by $P$. The shortest path, found with Dijkstra's algorithm, is therefore the most probable route from $a$ to $b$. It passes through zero or more intermediate charts, entering and leaving each through a portal. Within a chart we interpolate linearly to the portal of the next edge, represented by its mean $\mathbf{m}_{c_k c_{k+1}}$ or a sample drawn from $q_{c_k c_{k+1}}$, and transfer through the transition map of \autoref{eq:transition}.

\section{Results}
\label{sec:results}

We train our model on three image datasets and compare it with a standard Variational Auto-Encoder (VAE) and VaDE~\citep{jiang2016variational}.
We also attempted to compare it with CAE~\citep{schonsheck2019chart} and
MLMM-VAE~\citep{alberti2024manifold}, but did not succeed in fitting these models to the datasets: one or two charts covered the entire dataset, while the remaining charts were inactive.

All methods are scaled to use a similar number of parameters (model size), and hyperparameters are selected for each method and each dataset via grid search.
See \autoref{app:experimental-setup} for a detailed description of the experimental setup.
Additional results are available in \autoref{app:additional-results}, including ablations, reconstructions and samples.
Throughout this paper, 2D visualizations of higher-dimensional latent spaces use the first two PCA directions.

\paragraph{Geometric examples.}
In a geometric setting, such as the double torus shown in \autoref{fig:teaser}, our charts cover the surface with low distortion and clear separation.
Interpolation trajectories follow the data manifold closely, as shown in \autoref{fig:double-torus-interpolation-trajectories}.

\paragraph{Reconstruction \& generation.}
Our method faithfully reconstructs input examples, as shown in \autoref{fig:reconstruction-comparison}.
Reconstruction performance for our method and the capacity-matched baselines is also reported in \autoref{tab:vade_paper_table}.
For reconstruction alone, the vanilla VAE tends to perform best, which can be attributed to its larger decoder capacity in comparison to a single chart.
This tendency is reversed in image generation, where our method outperforms the two baselines.
This can be explained by the better latent structure and the lower incidence of samples drawn from low-density regions in the latent space (see \autoref{fig:results-chart-specialization}).
We can see that each chart's latent space is closer to a Gaussian than the combined latent space of the VAE.
As a result, the generated samples have fewer artifacts.
More examples generated with our model can be found in  \autoref{fig:teaser} and \autoref{fig:samples} in the appendix.

\begin{figure}[tb]
    \centering
    \includegraphics[width=\linewidth]{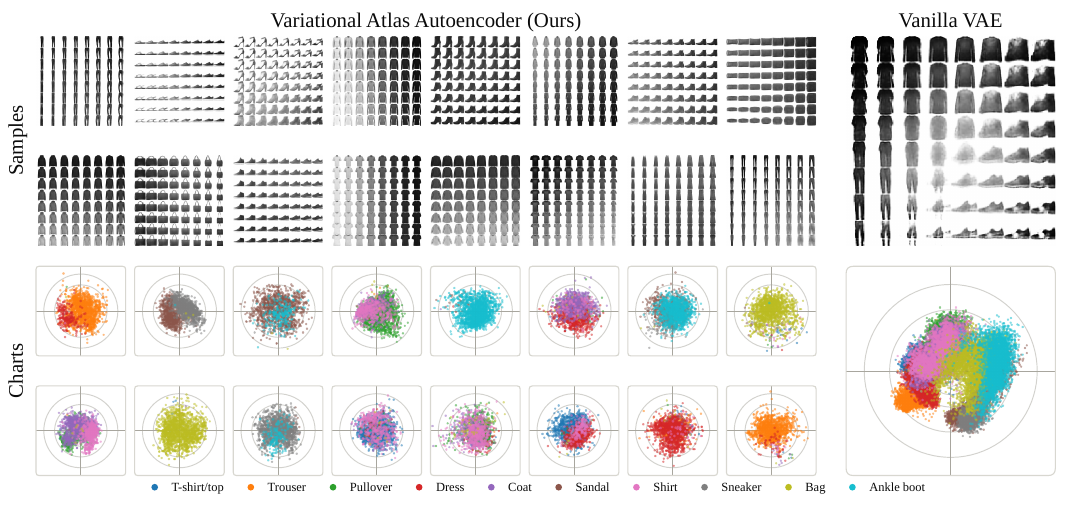}
    \caption{
        Sampling a model trained on Fashion-MNIST~\citep{fashionMNIST} along the two dominant latent PCA axes generates fewer degenerate samples than a parameter-matched vanilla VAE~\citep{kingma2013auto}.
        The charts also specialize without supervision, e.g. by placing bags without handles in a separate chart.
        For readability, we show 16 out of the 20 charts.
    }
    \label{fig:results-chart-specialization}
\end{figure}

\begin{figure}[tb]
    \centering
    \begin{subfigure}[b]{\textwidth}
        \centering
        \includegraphics[width=\linewidth]{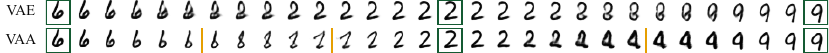}
        \subcaption{MNIST (10 charts)}
        \label{fig:mnist-interpolation}
    \end{subfigure}
    \begin{subfigure}[b]{\textwidth}
        \centering
        \includegraphics[width=\linewidth]{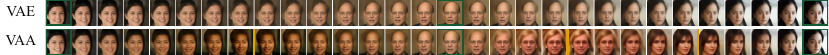}
        \subcaption{CelebA (10 charts)}
        \label{fig:celeba-interpolation}
    \end{subfigure}
     \begin{subfigure}[b]{\textwidth}
        \centering
        \includegraphics[width=\linewidth]{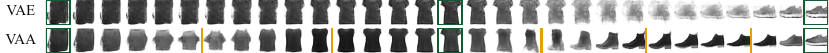}
        \subcaption{Fashion-MNIST (20 charts)}
        \label{fig:fashion-interpolation}
    \end{subfigure}
    \caption{
        Interpolations between sequences of randomly selected training samples (green).
        Our method may walk through one or more charts via portals (\autoref{sec:portals}), denoted by orange vertical bars.
        The vanilla VAE traverses its single global latent space in a straight line, which leads to several nonsensical generations along the way.
        In contrast, our method consistently stays on the data manifold, even if it entails covering more distance in latent space.
        \label{fig:interpolation-comparison}
    }
\end{figure}

\begin{table}[tbp]
\centering
\small
\caption{
    Sampling and reconstruction quality for a plain VAE~\citep{kingma2013auto}, VaDE~\citep{jiang2016variational}, and our atlas model (10 charts), all using the same number of parameters and training iterations.
    Quality is measured by the Fr\'echet Inception Distance (Inception-v3, \citet{szegedy2016rethinking}), averaged over three runs (mean $\pm$ std).
    Lower is better.
}
\label{tab:vade_paper_table}
\begin{tabular}{lcccccc}
\toprule
& \multicolumn{3}{c}{Sample FID} & \multicolumn{3}{c}{Reconstruction FID} \\
\cmidrule(lr){2-4} \cmidrule(lr){5-7}
Dataset & Plain VAE & VaDE & VAA (ours) & Plain VAE & VaDE & VAA (ours) \\
\midrule
MNIST & 19.84 $\pm$ 0.34 & \underline{14.95 $\pm$ 0.54} & \textbf{13.74 $\pm$ 0.55} & \textbf{16.81 $\pm$ 0.23} & 20.54 $\pm$ 0.12 & \underline{18.25 $\pm$ 0.13} \\
Fashion & 83.27 $\pm$ 1.62 & \underline{45.07 $\pm$ 1.62} & \textbf{37.73 $\pm$ 1.12} & \textbf{31.27 $\pm$ 0.32} & 43.49 $\pm$ 0.15 & \underline{32.83 $\pm$ 0.38} \\
CelebA & 30.43 $\pm$ 0.59 & \underline{28.01 $\pm$ 0.26} & \textbf{27.84 $\pm$ 0.42} & \textbf{16.66 $\pm$ 0.18} & 22.09 $\pm$ 0.07 & \underline{17.24 $\pm$ 0.25} \\
\bottomrule
\end{tabular}
\end{table}

\paragraph{Self-supervised structure discovery.}
Our variational chart selection objective uncovers dataset structure by regularizing reconstruction smoothness while allowing disconnected charts when needed.
Without supervision, our 20-chart VAA uncovers subclasses in Fashion-MNIST and other datasets (e.g.,\ sandals and shoes with heels, see \autoref{fig:results-chart-specialization}).
Moreover, we observe that the data closely follow the Gaussian prior in the latent domain, whereas this assumption breaks down in the capacity-matched VAE.

\paragraph{Interpolation.}
Clustering is not the only structure our method discovers. The transition kernel (\autoref{eq:kernel}) connects the charts into a graph that reflects semantic similarity without any supervision.
On MNIST (\autoref{fig:interpolation}), charts of similar digits such as 4 and 9 are strongly connected, while dissimilar ones such as 2 and 0 are nearly disconnected.
Interpolating along this graph (see \autoref{fig:interpolation}, \autoref{fig:interpolation-comparison}) yields realistic samples along the entire path, including the passage through intermediate charts.
In contrast, the capacity-matched VAE produces degenerate samples whenever its straight-line path crosses a region of low data density.
The \methodnamesingular\ avoids such regions because every segment of its path stays within a chart's latent support and hence close to the data manifold.

\section{Conclusion}
We have introduced an end-to-end probabilistic method to train variational atlas autoencoders, which are inspired by the manifold hypothesis.
We proposed latent-evidence-based routing, which connects the probability of each data point under each chart's prior to the confidence that the chart should encode it.
Finally, we showed how the mixture of charts can be turned into an atlas with a post-training step, allowing cross-chart interpolation trajectories that remain close to the data manifold.
Our method self-supervises the input data, resulting in smooth charts while staying faithful to the data manifold, and enables high-quality reconstruction and generation. Due to our chart selection not all model parameters are active during sampling, resulting in a lower sampling cost while outperforming relevant baselines on sampling quality.

In future work, determining an adequate chart count automatically, for example by adding or removing charts adaptively during training based on the data manifold's complexity, would improve the method's ease of use and utility.
Training can be further accelerated by training a lightweight routing proxy model, so that only the encoders in the most likely subset need to be evaluated, rather than all of them.
Next, because a single transition point is selected for each pair of charts, our cross-chart interpolations sometimes cover more distance than needed.
Introducing more portals should help minimize total interpolation distance.
Finally, scaling the method to larger datasets will likely require combining it with more advanced architectures such as latent-space diffusion or vector-quantized VAEs.

\subsection*{AI use statement}
In this work, we used large language models (LLMs) to prototype and develop code and to verify the concepts presented in this paper.
We have not used these tools to generate the core idea or the mathematics associated with it.
Additionally, we used LLM tools to help find and summarize related work, revise portions of the manuscript, and write code to build figures.
We have reviewed all AI-assisted work for correctness.
In particular, LLM-generated code was verified and tested by one author.
We take responsibility for the final content of this work, including text, claims, or artifacts produced with the aid of generative AI.

\subsection*{Ethics statement}
While we do not foresee risks introduced by this method beyond those shared by all generative methods, foundational research can have a wide range of uses in subsequent work that translates it into practical applications.
We employed standard academic datasets and performed all experiments \emph{in silico}.

\bibliography{iclr2027_conference}

@article{zhang2017deep,
  title={Deep unsupervised clustering using mixture of autoencoders},
  author={Zhang, Dejiao and Sun, Yifan and Eriksson, Brian and Balzano, Laura},
  journal={arXiv preprint arXiv:1712.07788},
  year={2017}
}

@inproceedings{ArvanitidisHH18,
  author       = {Georgios Arvanitidis and
                  Lars Kai Hansen and
                  S{\o}ren Hauberg},
  title        = {Latent Space Oddity: on the Curvature of Deep Generative Models},
  booktitle    = {6th International Conference on Learning Representations, {ICLR} 2018,
                  Vancouver, BC, Canada, April 30 - May 3, 2018, Conference Track Proceedings},
  publisher    = {OpenReview.net},
  year         = {2018},
  bibsource    = {dblp computer science bibliography, https://dblp.org}
}

@InProceedings{interp_oring21a,
  title = 	 {Autoencoder Image Interpolation by Shaping the Latent Space},
  author =       {Oring, Alon and Yakhini, Zohar and Hel-Or, Yacov},
  booktitle = 	 {Proceedings of the 38th International Conference on Machine Learning},
  pages = 	 {8281--8290},
  year = 	 {2021},
  volume = 	 {139},
  series = 	 {Proceedings of Machine Learning Research},
  month = 	 {18--24 Jul},
  publisher =    {PMLR},
  }

@article{vahdat2021score,
  title={Score-based generative modeling in latent space},
  author={Vahdat, Arash and Kreis, Karsten and Kautz, Jan},
  journal={Advances in Neural Information Processing Systems},
  volume={34},
  pages={11287--11302},
  year={2021}
}

@inproceedings{zhang2018unreasonable,
  title={The unreasonable effectiveness of deep features as a perceptual metric},
  author={Zhang, Richard and Isola, Phillip and Efros, Alexei A and Shechtman, Eli and Wang, Oliver},
  booktitle={2018 IEEE/CVF conference on computer vision and pattern recognition},
  pages={586--595},
  year={2018},
  organization={IEEE}
}

@article{stolberglarsen2021atlas,
title = {Atlas generative models and geodesic interpolation},
journal = {Image and Vision Computing},
volume = {122},
pages = {104433},
year = {2022},
author = {Jakob Stolberg-Larsen and Stefan Sommer},
}

@inproceedings{kingma2013auto,
  author       = {Diederik P. Kingma and
                  Max Welling},
  title        = {Auto-Encoding Variational {Bayes}},
  booktitle    = {2nd International Conference on Learning Representations, {ICLR}},
  year         = {2014},
  bibsource    = {dblp computer science bibliography, https://dblp.org}
}

@inproceedings{radford16dcgan,
  author       = {Alec Radford and
                  Luke Metz and
                  Soumith Chintala},
  title        = {Unsupervised Representation Learning with Deep Convolutional Generative
                  Adversarial Networks},
  booktitle    = {4th International Conference on Learning Representations, {ICLR}},
  year         = {2016},
  bibsource    = {dblp computer science bibliography, https://dblp.org}
}

@inproceedings{szegedy2016rethinking,
  title={Rethinking the inception architecture for computer vision},
  author={Szegedy, Christian and Vanhoucke, Vincent and Ioffe, Sergey and Shlens, Jon and Wojna, Zbigniew},
  booktitle={Proceedings of the IEEE conference on computer vision and pattern recognition},
  pages={2818--2826},
  year={2016}
}

@article{lecun1998gradient,
  title={Gradient-based learning applied to document recognition},
  author={LeCun, Yann and Bottou, L{\'e}on and Bengio, Yoshua and Haffner, Patrick},
  journal={Proceedings of the IEEE},
  volume={86},
  number={11},
  pages={2278--2324},
  year={1998},
  publisher={IEEE}
}

@inproceedings{liu2015faceattributes,
  title = {Deep Learning Face Attributes in the Wild},
  author = {Liu, Ziwei and Luo, Ping and Wang, Xiaogang and Tang, Xiaoou},
  booktitle = {Proceedings of International Conference on Computer Vision (ICCV)},
  year = {2015}
}

@article{fashionMNIST,
  author       = {Han Xiao and Kashif Rasul and Roland Vollgraf},
  title        = {{Fashion-MNIST}: a Novel Image Dataset for Benchmarking Machine Learning Algorithms},
  date         = {2017-08-28},
  year         = {2017},
  eprint       = {cs.LG/1708.07747},
  journal={arXiv preprint arXiv:1708.07747},

}

@article{schonsheck2019chart,
  title={Chart auto-encoders for manifold structured data},
  author={Schonsheck, Stefan and Chen, Jie and Lai, Rongjie},
  journal={arXiv preprint arXiv:1912.10094},
  year={2019}
}

@inproceedings{tanielian2020learning,
  title={Learning disconnected manifolds: a no {GAN}'s land},
  author={Tanielian, Ugo and Issenhuth, Thibaut and Dohmatob, Elvis and Mary, J{\'{e}}r{\'{e}}mie},
  booktitle={International Conference on Machine Learning},
  pages={9418--9427},
  year={2020},
  organization={PMLR}
}

@inproceedings{brown2022verifying,
  author       = {Bradley C. A. Brown and
                  Anthony L. Caterini and
                  Brendan Leigh Ross and
                  Jesse C. Cresswell and
                  Gabriel Loaiza{-}Ganem},
  title        = {Verifying the Union of Manifolds Hypothesis for Image Data},
  booktitle    = {The Eleventh International Conference on Learning Representations,
                  {ICLR}},
  year         = {2023},
}

@article{ye2021deep,
  title={Deep mixture generative autoencoders},
  author={Ye, Fei and Bors, Adrian G.},
  journal={IEEE Transactions on Neural Networks and Learning Systems},
  volume={33},
  number={10},
  pages={5789--5803},
  year={2022},
  publisher={IEEE}
}

@inproceedings{jiang2016variational,
author = {Jiang, Zhuxi and Zheng, Yin and Tan, Huachun and Tang, Bangsheng and Zhou, Hanning},
title = {Variational Deep Embedding: An Unsupervised and Generative Approach to Clustering},
year = {2017},
isbn = {9780999241103},
publisher = {AAAI Press},
booktitle = {Proceedings of the 26th International Joint Conference on Artificial Intelligence},
pages = {1965–1972},
numpages = {8},
location = {Melbourne, Australia},
series = {IJCAI'17},
}

@book{lee2012smooth,
    author    = {Lee, John M.},
    title     = {Introduction to Smooth Manifolds},
    edition   = {2},
    series    = {Graduate Texts in Mathematics},
    volume    = {218},
    publisher = {Springer},
    year      = {2012}
}

@article{fefferman2016testing,
    author  = {Fefferman, Charles and Mitter, Sanjoy and Narayanan, Hariharan},
    title   = {Testing the manifold hypothesis},
    journal = {Journal of the American Mathematical Society},
    volume  = {29},
    number  = {4},
    pages   = {983--1049},
    year    = {2016}
}

@article{alberti2024manifold,
  title={Manifold learning by mixture models of {VAE}s for inverse problems},
  author={Alberti, Giovanni S. and Hertrich, Johannes and Santacesaria, Matteo and Sciutto, Silvia},
  journal={Journal of Machine Learning Research},
  volume={25},
  number={202},
  pages={1--35},
  year={2024}
}

@article{batson2021topological,
  title={Topological obstructions to autoencoding},
  author={Batson, Joshua and Haaf, C. Grace and Kahn, Yonatan and Roberts, Daniel A.},
  journal={Journal of High Energy Physics},
  volume={2021},
  number={4},
  pages={280},
  year={2021},
  publisher={Springer}
}

@inproceedings{hoffman2016elbo,
    author    = {Hoffman, Matthew D. and Johnson, Matthew J.},
    title     = {{ELBO} surgery: yet another way to carve up the variational evidence lower bound},
    booktitle = {Workshop on Advances in Approximate Bayesian Inference, NeurIPS},
    year      = {2016}
}

@inproceedings{rombach2022high,
  title={High-resolution image synthesis with latent diffusion models},
  author={Rombach, Robin and Blattmann, Andreas and Lorenz, Dominik and Esser, Patrick and Ommer, Bj{\"o}rn},
  booktitle={2022 IEEE/CVF conference on computer vision and pattern recognition (CVPR)},
  pages={10674--10685},
  year={2022}
}

@article{brehmer2020flows,
  title={Flows for simultaneous manifold learning and density estimation},
  author={Brehmer, Johann and Cranmer, Kyle},
  journal={Advances in neural information processing systems},
  volume={33},
  pages={442--453},
  year={2020}
}

@inproceedings{teh2003automatic,
  author       = {Yee Whye Teh and
                  Sam T. Roweis},
  title        = {Automatic Alignment of Local Representations},
  booktitle    = {Advances in Neural Information Processing Systems},
  pages        = {841--848},
  publisher    = {{MIT} Press},
  year         = {2002},
}

@inproceedings{brand2003charting,
  author    = {Matthew Brand},
  title     = {Charting a Manifold},
  booktitle = {Advances in Neural Information Processing Systems},
  volume    = {15},
  pages     = {961--968},
  year      = {2002}
}

@article{tipping1999mixtures,
  author  = {Michael E. Tipping and Christopher M. Bishop},
  title   = {Mixtures of Probabilistic Principal Component Analyzers},
  journal = {Neural Computation},
  volume  = {11},
  number  = {2},
  pages   = {443--482},
  year    = {1999}
}

@techreport{ghahramani1996em,
  author      = {Zoubin Ghahramani and Geoffrey E. Hinton},
  title       = {The {EM} Algorithm for Mixtures of Factor Analyzers},
  institution = {Department of Computer Science, University of Toronto},
  number      = {CRG-TR-96-1},
  year        = {1996}
}

@inproceedings{arvanitidis2020geometrically,
  author       = {Georgios Arvanitidis and
                  S{\o}ren Hauberg and
                  Bernhard Sch{\"{o}}lkopf},
  title        = {Geometrically Enriched Latent Spaces},
  booktitle    = {The 24th International Conference on Artificial Intelligence and Statistics},
  series       = {Proceedings of Machine Learning Research},
  volume       = {130},
  pages        = {631--639},
  publisher    = {{PMLR}},
  year         = {2021},
  bibsource    = {dblp computer science bibliography, https://dblp.org}
}
\bibliographystyle{iclr2027_conference}

\newpage
\appendix
{\huge{Appendices}}
\section{Preliminaries}
\label{app:preliminaries}

We collect the geometric notions used in the main text and fix notation. Everything in \autoref{app:manifolds} is standard; we follow the conventions of \citet{lee2012smooth}.

\subsection{Manifolds, charts, and atlases}
\label{app:manifolds}

\paragraph{Manifolds.}
A \emph{$k$-dimensional manifold} is a space $\mathcal{M}$ that is locally Euclidean: every point has an open neighborhood homeomorphic to an open subset of $\mathbb{R}^k$.
Throughout, we assume that the \emph{manifold hypothesis} \citep{fefferman2016testing} holds: the data lie on (or close to) a manifold $\mathcal{M} \subset \mathbb{R}^d$ with $k \ll d$.

\paragraph{Charts.}
A \emph{chart} is a pair $(U, \varphi)$ where $U \subset \mathcal{M}$ is open and $\varphi\colon U \to \mathbb{R}^k$ is a homeomorphism onto an open subset of $\mathbb{R}^k$. The map $\varphi$ assigns local coordinates to points of $U$, and its inverse $\varphi^{-1}$ parametrizes $U$. In autoencoder terms, the encoder plays the role of $\varphi$ and the decoder that of $\varphi^{-1}$.

\paragraph{Atlases and transition maps.}
An \emph{atlas} is a collection of charts $\{(U_c, \varphi_c)\}_{c=1}^{C}$ whose domains cover $\mathcal{M}$. Where two charts overlap, their coordinates are related by the \emph{transition map}
\begin{equation}
    \psi_{ab} = \varphi_b \circ \varphi_a^{-1},
    \label{eq:prelim:transition}
\end{equation}
defined on $\varphi_a(U_a \cap U_b)$. Transition maps are themselves homeomorphisms, satisfy $\psi_{ba} = \psi_{ab}^{-1}$, and compose consistently on triple overlaps: $\psi_{bc} \circ \psi_{ab} = \psi_{ac}$. \autoref{sec:portals} measures how well the learned charts satisfy these conditions.

\paragraph{Ball charts.}
The charts learned in this paper are slightly more restrictive than the general definition. Each decoder is defined on all of $\mathbb{R}^L$ and the latent prior $\mathcal{N}(\mathbf{0}, \mathbf{I})$ has full support, so each chart domain is (approximately) a deformed copy of $\mathbb{R}^L$ rather than an arbitrary open set. We refer to such charts as \emph{ball charts}.

\subsection{Why a single chart is not enough}
\label{app:onechart}

A standard VAE is a single ball chart: one decoder maps a connected latent space $\mathbb{R}^L$ onto the data.
This can only faithfully represent $\mathcal{M}$ if $\mathcal{M}$ itself looks like $\mathbb{R}^L$, which excludes most manifolds of interest.

\begin{proposition}
\label{prop:onechart}
Let $g\colon \mathbb{R}^L \to \mathbb{R}^d$ be continuous and let $\mathcal{M} \subset \mathbb{R}^d$ be compact or disconnected. Then $g$ is not a homeomorphism onto $\mathcal{M}$. In particular, $\mathcal{M}$ cannot be covered by a single ball chart.
\end{proposition}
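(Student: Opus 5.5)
Suppose, for contradiction, that $g\colon \mathbb{R}^L \to \mathcal{M}$ is a homeomorphism onto $\mathcal{M}$. Then $\mathcal{M}$ inherits every topological property of $\mathbb{R}^L$ that is invariant under homeomorphism. We only need two: \emph{connectedness} and \emph{non-compactness}. We handle the two hypotheses on $\mathcal{M}$ as separate cases.

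\emph{Disconnected case.} $\mathbb{R}^L$ is connected, being path-connected via straight lines. The continuous image of a connected space is connected, so $g(\mathbb{R}^L) = \mathcal{M}$ would be connected, contradicting the assumption. This case uses only continuity and surjectivity of $g$, not injectivity. It therefore gives the stronger statement already noted in \autoref{sec:intro}: a single continuous decoder cannot even map \emph{onto} a disconnected $\mathcal{M}$ without also covering points outside it.

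\emph{Compact case.} If $g$ were a homeomorphism, then $g^{-1}\colon \mathcal{M} \to \mathbb{R}^L$ would be continuous and surjective. It would therefore map the compact set $\mathcal{M}$ onto a compact subset of $\mathbb{R}^L$, namely all of $\mathbb{R}^L$. But $\mathbb{R}^L$ is unbounded, hence not compact by Heine--Borel, which is a contradiction. The degenerate case $L=0$ needs a convention: there $\mathbb{R}^0$ is a point, so either exclude it or note that $\mathcal{M}$ would then be a single point. I would state $L \ge 1$, matching the paper's setting.

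The ``in particular'' clause follows once we recall that a ball chart, as defined in \autoref{app:manifolds}, is exactly a decoder defined on all of $\mathbb{R}^L$ that parametrizes its domain homeomorphically. A single ball chart covering $\mathcal{M}$ would therefore be a homeomorphism $\mathbb{R}^L \to \mathcal{M}$, which we have ruled out. The argument is elementary and has no real obstacle. The only care needed is in this final step: make explicit that ``covering $\mathcal{M}$ by one ball chart'' means $g$ maps $\mathbb{R}^L$ homeomorphically onto all of $\mathcal{M}$, and not merely onto an open subset of $\mathcal{M}$.
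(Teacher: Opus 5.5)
Your proof is correct and follows the same route as the paper, which observes in one line that homeomorphisms preserve connectedness and compactness while $\mathbb{R}^L$ is connected and non-compact. Your case split is a refinement of that argument, not a different approach: you note that the disconnected case needs only continuity and surjectivity, and you make the restriction $L \ge 1$ explicit, which the paper leaves implicit.
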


\begin{proof}
Homeomorphisms preserve compactness and connectedness \citep{lee2012smooth}. Since $\mathbb{R}^L$ is connected and not compact, so is any homeomorphic image of it.
\end{proof}

Both hypotheses are typical of real data. Bounded data such as images lie on compact sets, and multiclass data such as MNIST lie on several disconnected pieces. In either case, at least two charts are needed. The sphere, for example, requires exactly two. The proposition also explains the spurious interpolants of single-latent models discussed in \autoref{sec:intro}. The image $g(\mathbb{R}^L)$ of a single decoder is connected, so if it covers a disconnected $\mathcal{M}$ it must also pass through points outside $\mathcal{M}$. Off-manifold decoding along latent paths between components is therefore unavoidable, not a failure of optimization.

\subsection{Autoencoders as approximate charts}
\label{app:autoencoders}

\paragraph{Autoencoders.}
An autoencoder consists of an encoder $f\colon \mathbb{R}^d \to \mathbb{R}^L$ and a decoder $g\colon \mathbb{R}^L \to \mathbb{R}^d$ trained to minimize the reconstruction error $\mathbb{E}_{p(\mathbf{x})}\|\mathbf{x} - g(f(\mathbf{x}))\|^2$. With perfect reconstruction, $(f, g)$ would be a chart with $U = \mathcal{M}$. Reconstruction is never perfect, so we call $(f, g)$ an \emph{approximate chart}: a chart up to the reconstruction error. The atlas properties studied in \autoref{sec:portals} hold to the same accuracy.

\paragraph{Variational autoencoders.}
A VAE \citep{kingma2013auto} places a prior $p(\mathbf{z}) = \mathcal{N}(\mathbf{0}, \mathbf{I})$ on the latent, models $p_\theta(\mathbf{x} \mid \mathbf{z})$ with the decoder, and approximates the posterior by $q_\phi(\mathbf{z} \mid \mathbf{x})$ using the encoder. It is trained by maximizing the evidence lower bound
\begin{equation}
    \log p_\theta(\mathbf{x}) \ge \mathbb{E}_{q_\phi(\mathbf{z} \mid \mathbf{x})}[\log p_\theta(\mathbf{x} \mid \mathbf{z})] - D_{\mathrm{KL}}\big(q_\phi(\mathbf{z} \mid \mathbf{x}) \,\|\, p(\mathbf{z})\big),
    \label{eq:prelim:elbo}
\end{equation}
which is the single-chart case $C = 1$ of \autoref{eq:multichart-elbo}.

\paragraph{Chart autoencoders.}
Chart autoencoders \citep{schonsheck2019chart} use $C$ encoder--decoder pairs, one per chart. Since a plain autoencoder has no likelihood, they require a separate network and loss to decide which chart a point belongs to. In our model this decision follows from the ELBO itself (\autoref{sec:methodology}).

\subsection{Notation}
\label{app:notation}

For a map $f$ and a distribution $\mu$, $f_\#\mu$ denotes the pushforward of $\mu$ under $f$, i.e. the distribution of $f(\mathbf{u})$ for $\mathbf{u} \sim \mu$. For a distribution $\nu$ on $\mathbb{R}^L$, $\mathrm{mm}(\nu)$ denotes the Gaussian with the same mean and covariance. A set of non-negative functions $\{w_c\}$ with $\sum_c w_c(\mathbf{x}) = 1$ for every $\mathbf{x}$ is a \emph{partition of unity}; the routing posterior $q_\phi(c \mid \mathbf{x})$ is one by construction.

\section{Detailed derivation of the objective function}
\label{app:full-derivation}

We provide the detailed derivation of the objective function.
To model $p_\theta(\x)$ using multiple charts, we introduce a discrete chart variable $c\in\{1,\ldots,C\}$. As a modeling assumption, each chart is assigned the same prior probability, so that $p(c)=1/C$. Each chart is further equipped with a continuous $L$-dimensional latent variable $\z_c\in\mathbb{R}^L$ with conditional prior $p(\z_c\mid c)=\mathcal{N}(\mathbf{0},\mathbf{I})$.
Repeating \autoref{eq:data-distr-charts} for convenience, the resulting modeled data distribution is then given by:

\begin{equation} \label{eq:data-distr-charts-appendix}
    p_\theta(\x)
    =
    \sum_{c=1}^{C} p(c)
    \int
    p_\theta(\x\mid\z_c,c)\,
    p(\z_c\mid c)\,
    \mathrm{d}\z_c .
\end{equation}

Given an observation $\x$, inference requires the posterior distribution over the chart assignment $p_\theta(c\mid\x)$ and the chart-specific latent posterior $p_\theta(\z_c\mid c,\x)$. Since these posterior distributions are generally intractable, we introduce the variational approximations $q_\phi(c\mid\x)$ and $q_\phi(\z_c\mid c,\x)$.
Together, they define the approximate posterior (repeating \autoref{eq:approximate-posterior} for convenience)

\begin{equation} \label{eq:approximate-posterior-appendix}
    q_\phi(c,\z_c\mid\x)
    =
    q_\phi(c\mid\x)\,
    q_\phi(\z_c\mid c,\x),
\end{equation}

which approximates the true posterior $p_\theta(c,\z_c\mid\x)$.

Direct evaluation and optimization of the marginal log-likelihood in \autoref{eq:data-distr-charts-appendix} are generally intractable because the integral over the continuous latent variables cannot usually be computed in closed form. As in the single-chart VAE setting~\citep{kingma2013auto}, we therefore optimize a variational lower bound on \(\log p_\theta(\x)\).

Using $q_\phi(c,\z_c\mid\x)=q_\phi(c\mid\x)q_\phi(\z_c\mid c,\x)$, the marginal likelihood can be rewritten as
\begin{align}
    \log p_\theta(\x)
    &=
    \log
    \sum_{c=1}^{C}
    \int
    q_\phi(c,\z_c\mid\x)
    \frac{
        p_\theta(\x,\z_c,c)
    }{
        q_\phi(c,\z_c\mid\x)
    }
    \,\mathrm{d}\z_c .
\end{align}
Applying Jensen's inequality yields
\begin{align*}
    \log p_\theta(\x)
    &\geq
    \sum_{c=1}^{C}
    q_\phi(c\mid\x)
    \int
    q_\phi(\z_c\mid c,\x)
    \log
    \frac{
        p_\theta(\x,\z_c,c)
    }{
        q_\phi(c,\z_c\mid\x)
    }
    \,\mathrm{d}\z_c
    \\
    &=
    \mathbb{E}_{c\sim q_\phi(c\mid\x)}
    \left[
        \mathbb{E}_{\z_c\sim q_\phi(\z_c\mid c,\x)}
        \left[
            \log
            \frac{
                p_\theta(\x,\z_c,c)
            }{
                q_\phi(c,\z_c\mid\x)
            }
        \right]
    \right].
\end{align*}
This defines the evidence lower bound for the multi-chart model,
\begin{equation*}
    \log p_\theta(\x)
    \geq
    \mathcal{L}_{\mathrm{ELBO}}(\x)
    =
    \mathbb{E}_{q_\phi(c,\z_c\mid\x)}
    \left[
        \log
        \frac{
            p_\theta(\x,\z_c,c)
        }{
            q_\phi(c,\z_c\mid\x)
        }
    \right].
\end{equation*}
The lower bound can equivalently be related to the exact posterior through
\begin{equation*}
    \log p_\theta(\x)
    =
    \mathcal{L}_{\mathrm{ELBO}}(\x)
    +
    D_{\mathrm{KL}}\!\left(
        q_\phi(c,\z_c\mid\x)
        \,\Vert\,
        p_\theta(c,\z_c\mid\x)
    \right).
\end{equation*}
Since the KL divergence is non-negative, it follows that
$\log p_\theta(\x)\geq\mathcal{L}_{\mathrm{ELBO}}(\x)$.
Using the factorization
$p_\theta(\x,\z_c,c) = p_\theta(\x\mid\z_c,c)\, p(\z_c\mid c)\, p(c)$ and
$q_\phi(c,\z_c\mid\x) = q_\phi(\z_c\mid c,\x)\, q_\phi(c\mid\x)$,
the ELBO can be written as
\begin{align*}
    \mathcal{L}_{\mathrm{ELBO}}(\x)
    &=
    \mathbb{E}_{c\sim q_\phi(c\mid\x)}
    \mathbb{E}_{\z_c\sim q_\phi(\z_c\mid c,\x)}
    \left[
        \log
        \frac{
            p_\theta(\x\mid\z_c,c)\,
            p(\z_c\mid c)\,
            p(c)
        }{
            q_\phi(\z_c\mid c,\x)\,
            q_\phi(c\mid\x)
        }
    \right]
    \\
    &=
    \mathbb{E}_{c\sim q_\phi(c\mid\x)}
    \mathbb{E}_{\z_c\sim q_\phi(\z_c\mid c,\x)}
    \left[
        \log p_\theta(\x\mid\z_c,c)
    \right]
    \nonumber\\
    &\quad+
    \mathbb{E}_{c\sim q_\phi(c\mid\x)}
    \mathbb{E}_{\z_c\sim q_\phi(\z_c\mid c,\x)}
    \left[
        \log
        \frac{
            p(\z_c\mid c)
        }{
            q_\phi(\z_c\mid c,\x)
        }
    \right]
    \nonumber\\
    &\quad+
    \mathbb{E}_{c\sim q_\phi(c\mid\x)}
    \left[
        \log
        \frac{
            p(c)
        }{
            q_\phi(c\mid\x)
        }
    \right].
\end{align*}
Expressing the last two terms as KL divergences gives
\begin{align*}
    \mathcal{L}_{\mathrm{ELBO}}(\x)
    &=
    \mathbb{E}_{c\sim q_\phi(c\mid\x)}
    \mathbb{E}_{\z_c\sim q_\phi(\z_c\mid c,\x)}
    \left[
        \log p_\theta(\x\mid\z_c,c)
    \right]
    \nonumber\\
    &\quad-
    \mathbb{E}_{c\sim q_\phi(c\mid\x)}
    \left[
        D_{\mathrm{KL}}\!\left(
            q_\phi(\z_c\mid c,\x)
            \,\Vert\,
            p(\z_c\mid c)
        \right)
    \right]
    \nonumber\\
    &\quad-
    D_{\mathrm{KL}}\!\left(
        q_\phi(c\mid\x)
        \,\Vert\,
        p(c)
    \right).
\end{align*}
Writing the chart router as an explicit finite sum makes it clear that the per-chart ELBO
\begin{equation}
    \tag{\ref{eq:per-chart-elbo}}
    \mathcal{L}_c(\x) = \mathbb{E}_{q_\phi(\z_c \mid c, \x)}\!\left[\log p_\theta(\x \mid \z_c, c)\right] - D_{\mathrm{KL}}\!\left(q_\phi(\z_c \mid c, \x) \,\|\, p(\z_c \mid c)\right)
\end{equation}
is simply weighted by each chart's routing probability $q_\phi(c\mid\x)$, while the prior routing term $D_{\mathrm{KL}}(q_\phi(c\mid\x) \,\Vert\,p(c))$ regularizes the routing function, yielding \autoref{eq:multichart-elbo}.
In practice, we replace the log-likelihood by a reconstruction loss $\mathcal{L}_{\mathrm{recon}}(\x, g_c(\z_c))$, which defaults to $-\log p_\theta(\x\mid\z_c,c)$ but may also be a perceptual loss (\autoref{app:experimental-setup}), and weight the regularizers by the hyperparameters $\beta$ (KL weight).
This yields the per-chart loss
\begin{equation}
\label{eq:per-chart-loss}
\ell_c(\x)
=
\mathbb{E}_{q_\phi(\z_c \mid c,\x)}
\left[
    \mathcal{L}_{\mathrm{recon}}\!\left(\x, g_c(\z_c)\right)
\right]
+
\beta\, D_{\mathrm{KL}}\!\left(
    q_\phi(\z_c\mid c,\x)
    \,\Vert\,
    p(\z_c\mid c)
\right)
\end{equation}
and the training loss
\begin{equation}
\label{eq:weighted-loss}
\ell(\x)
=
 D_{\mathrm{KL}}\!\left(
    q_\phi(c\mid\x)
    \,\Vert\,
    p(c)
\right)
+ \sum_{c=1}^{C}
    q_\phi(c\mid\x)\, \ell_c(\x),
\end{equation}
where $q_\phi(c\mid\x)$ is the router of \autoref{eq:prior-marginal-routing} at temperature $T$. The loss coincides with $-\mathcal{L}_{\mathrm{ELBO}}$ from \autoref{eq:multichart-elbo} when $\beta = 1$ and $\mathcal{L}_{\mathrm{recon}} = -\log p_\theta$.
\section{Reducing the roundtrip error}
\label{app:roundtrip_error}
In the portal objective \autoref{eq:portal-objective}, the second term preserves $q_{ab}$ as a distribution.
This locates the overlap but does not make $\psi_{ab}$ invertible on it.
To fix this, we propose an optional lightweight fine-tuning step in which the expected pointwise defect,
\begin{equation}
  \mathcal{L}_{\mathrm{atlas}} =
  \mathbb{E}_{(a,b)\sim\pi}\Bigl[
    \mathbb{E}_{\mathbf{z}\sim q_{ab}}\|R_{ab}(\mathbf{z}) - \mathbf{z}\|_a^2
    + \mathbb{E}_{\mathbf{w}\sim(\psi_{ab})_\# q_{ab}}\|R_{ba}(\mathbf{w}) - \mathbf{w}\|_b^2
  \Bigr],
  \label{eq:atlas-loss}
\end{equation}
is minimized in both directions, jointly with the training loss \autoref{eq:weighted-loss} as a retention term.
Distances are measured in the Mahalanobis metric $\|\cdot\|_c$ defined above, so that a defect of one is the size of a typical latent and defects are comparable across charts.
The outer expectation is estimated by drawing pairs from $\pi$ with replacement and the inner expectations by reparameterized Monte Carlo with gradients through $\psi_{ab}$.
The retention term ensures that the model retains its sampling and reconstruction performance.
\section{Experimental setup}
\label{app:experimental-setup}

This appendix provides additional details on the setup summarized in \autoref{sec:results},
specifically on the architecture
(Sec.~\ref{sec:architecture}), the chart-balance term and its weight
$\gamma$ (Sec.~\ref{sec:balance}), per-dataset hyperparameters
(Secs.~\ref{sec:image-datasets}-\ref{sec:torus-setup}), and how the vanilla VAE and
VaDE baselines are made comparable to ours (Sec.~\ref{sec:baselines}).

\subsection{Network Architecture}
\label{sec:architecture}

We use two different network architectures: an MLP-only architecture for the double
torus dataset, and a convolutional architecture for the image-based
datasets.

\paragraph{MLP (double torus).} The encoders and decoders are
\texttt{depth}-layer, \texttt{hidden}-wide dense networks with ReLU
nonlinearities, where each encoder/decoder pair is private to its respective chart.
This can be seen as $C$ separate pairs of dense MLPs, one per chart; although in
practice we implement this as a single batched matrix multiply over
all $C$ charts at once, rather than looping over $C$ separate layers,
which is significantly faster
on a GPU. The encoder maps the 3D ambient coordinates that lie on the
surface of the torus to a
$2\times$\texttt{latent\_dim} latent vector containing the mean and log-variance; the decoder mirrors this
back to 3D, and its raw output is used directly as the reconstruction,
with no output nonlinearity (Sec.~\ref{sec:torus-setup}).

\paragraph{DCGAN (MNIST, Fashion-MNIST, CelebA).} For image datasets, we use an architecture that can exploit their spatial structure more effectively. The encoder and decoder follow
the convolutional architecture of~\cite{radford16dcgan},
with the channel width doubling at every downsampling stage and, conversely,
halving at every upsampling stage.
In every tuned configuration reported in this
paper, this entire convolutional network (encoder and decoder) is
shared identically across all charts (Table~\ref{tab:arch}). Charts only
differ at the very end: the shared encoder's final feature map is flattened
and passed through one linear layer per chart, mapping it to that chart's
own $2\times$\texttt{latent\_dim} latent vector containing the mean and log-variance; the decoder mirrors
this with one linear layer per chart that expands $z$ back to that same
feature-map size, before the (again shared) network decodes it into an
image.
These per-chart linear layers are the only chart-specific
parameters in this backbone. The VAE and VaDE baselines
(Sec.~\ref{sec:baselines}) reuse this same convolutional network at $C=1$.

The architecture also supports making some of the deepest convolutional
stages chart-specific as well, rather than only the two linear layers. Reducing the size of the shared encoder/decoder model improves the generation speed, but comes with the trade-off that each chart needs to learn the mapping from ambient to latent space from scratch, effectively re-learning the feature extractor.

\begin{table}[H]
    \centering
    \caption{
        Resolved DCGAN dimensions per dataset: channel widths of the shared convolutional network and the size of the feature map passed to the per-chart linear layers.
        \label{tab:arch}
    }
    \begin{tabular}{lrrr}
    \toprule
    & MNIST & Fashion-MNIST & CelebA \\
    \midrule
    Convolutional stages & 2 & 2 & 4 \\
    Channel widths & 64, 128 & 64, 128 & 64, 128, 256, 512 \\
    Bottleneck channels & 13 & 6 & 51 \\
    Bottleneck resolution & $7\times7$ & $7\times7$ & $4\times4$ \\
    \bottomrule
    \end{tabular}
\end{table}

\subsection{The Balance Weight $\gamma$}
\label{sec:balance}

Although \autoref{eq:multichart-elbo} already contains a routing KL toward the uniform chart prior,
we keep that term at weight $1$ and do not tune it. The hyperparameter
$\gamma$ instead weights a separate \emph{balance term}, which compares the
uniform distribution $U$ over the $C$ charts with the routing averaged over
a minibatch, $\bar{r}_c = \frac{1}{B}\sum_{i=1}^{B} q_\phi(c \mid x_i)$.
The loss we minimize is therefore
\begin{equation}
\mathcal{L} = -\mathcal{L}_{\mathrm{ELBO}} + \gamma\, D_{\mathrm{KL}}(U\,\|\,\bar{r}),
\label{eq:balance}
\end{equation}
with $\gamma = 50$ for the entire run in every configuration.

The two terms push in different ways. The routing KL of \autoref{eq:multichart-elbo} acts on each
sample and pulls every $q_\phi(c\mid x)$ toward uniform, which also
discourages confident chart assignments. The balance term acts only on the
batch average: individual points may still be routed sharply to a single
chart, as long as all charts are used about equally overall. To see why, note
that the batch-averaged routing KL splits as
$\frac{1}{B}\sum_i D_{\mathrm{KL}}(r_i\,\|\,U) = \hat{I}(X;C) +
D_{\mathrm{KL}}(\bar{r}\,\|\,U)$, with $r_i = q_\phi(c\mid x_i)$ and
$\hat{I}(X;C) = \frac{1}{B}\sum_i D_{\mathrm{KL}}(r_i\,\|\,\bar{r})$ the
empirical mutual information between inputs and charts. The first part
penalizes confident routing, the second unequal chart usage. The $\gamma$ term amplifies only the
latter (in reversed form), so charts are kept alive without blurring the assignments. This is the same role played by the auxiliary
load-balancing loss in related work~\citep{zhang2017deep}, which prevents a router from
sending all inputs to a few expert charts.

\subsection{Image Datasets}
\label{sec:image-datasets}

Hyperparameters (\autoref{tab:image-hyperparameters}) are selected independently per dataset via grid search. The VAE and VaDE baselines reuse the architecture of Sec.~\ref{sec:architecture} and the reconstruction loss below, with only width rescaled to match parameter count (Sec.~\ref{sec:baselines}).

\begin{table}[H]
    \centering
    \caption{
        Tuned hyperparameters for MNIST, Fashion-MNIST, and CelebA.
        \label{tab:image-hyperparameters}
    }
    \begin{tabular}{lrrr}
    \toprule
    Hyperparameter & MNIST & Fashion-MNIST & CelebA \\
    \midrule
    Charts & 10 & 20 & 10 \\
    Latent dimension & 16 & 16 & 128 \\
    Reconstruction loss & Bernoulli (BCE) & Bernoulli (BCE) & \makecell[tr]{L1 + LPIPS, scale $\times 3$} \\
    $T$ (routing temperature) & Fixed at 1.0 & Fixed at 1.0 & Annealed $6.0 \to 2.0$  \\
    $\beta$ (KL weight) & 0.5 & 0.1 & 0.1 \\
    $\gamma$ (load-balancing weight) & 50, entire run & 50, entire run & 50, entire run \\
    Learning rate & $5\times10^{-4}$ & $2\times10^{-3}$ & $5\times10^{-4}$ \\
    LR schedule & Cos-annealed to 1\%  & Cos-annealed to 1\% & Cos-annealed to 1\% \\
    Batch size & 256 & 256 & 128 \\
    Epochs & 200 & 200 & 200 \\
    \bottomrule
    \end{tabular}
\end{table}

A per-pixel likelihood scores each pixel independently, which rewards blurry reconstructions.
For CelebA specifically, we replace it with a mix of L1 error in pixel space and LPIPS~\citep{zhang2018unreasonable}, a common perceptual loss used by latent-diffusion autoencoders.
Rather than comparing raw pixels, LPIPS compares distances in a pretrained latent space that tracks human judgments more closely.
We weight the two terms so that each contributes about as many nats as the part of the BCE loss it replaces, split evenly, then scale the sum by 3 (see \autoref{tab:image-hyperparameters}).

Two optimizations keep this loss practical. LPIPS is by far the most
expensive term to compute, so we evaluate it once per image rather than
once per chart: every chart still gets its own L1 term, but all charts
share the LPIPS score of whichever chart reconstructs best. This leaves
the ranking between charts unchanged while avoiding the slowdown
from scoring every chart separately. We also run the LPIPS network at
reduced precision, which roughly halves its cost with no measurable effect
on training.

\subsection{Double Torus}
\label{sec:torus-setup}

The double torus is a synthetic point cloud sampled uniformly by area from
a genus-2 surface embedded in $\mathbb{R}^3$ (the surface of \autoref{fig:teaser}). Targets are
3D coordinates rather than pixel intensities, so we use a squared-error
reconstruction term directly on the decoder's raw output, rather than
passing it through a sigmoid first as we do for pixel intensities. This term is small relative to the KL
term and induces posterior collapse if left uncorrected. We counteract this
by scaling up the reconstruction term ($200\times$), with no other change to the loss,
routing, or training procedure. Table~\ref{tab:torus} lists the full
configuration.

\newcolumntype{R}[1]{>{\raggedleft\arraybackslash}p{#1}}
\begin{table}[H]
    \centering
    \caption{
        Tuned hyperparameters for the double torus.
        \label{tab:torus}
    }
    \begin{tabular}{lR{7.5cm}}
    \toprule
    Hyperparameter & Value \\
    \midrule
    Backbone & MLP, hidden width 512, depth 5 \\
    Charts & 4 \\
    Latent dimension & 2 \\
    Reconstruction loss & Squared error, no output squashing, scale $\times 200$ \\
    $T$ (routing temperature) & annealed $1.0 \to 0.2$ \\
    $\beta$ (KL weight) & 1.5 \\
    $\gamma$ (load-balancing weight) & 50, for the entire run \\
    Learning rate & $1\times10^{-3}$, cosine-annealed to 1\% \\
    Batch size & 2048 \\
    Epochs & 100 \\
    Surface points & $1{,}000{,}000$ \\
    \bottomrule
    \end{tabular}
\end{table}

\subsection{Baselines}
\label{sec:baselines}

Both baselines are given our model's total trainable parameter count on each dataset: we perform a binary search over the baseline's width parameter (channel count for DCGAN, hidden width for the MLP) to match that target, rather than altering depth or other architectural choices, and train for the same number of epochs with the same reconstruction loss.
The vanilla VAE control is simply $C=1$ (Sec.~A.3), so matching parameters amounts to retuning the backbone width at $C=1$.

For VaDE, we port the original authors' code to modern PyTorch, verify it reproduces their reported numbers, and then swap its Gaussian-mixture prior onto our own single-chart DCGAN backbone (Sec.~\ref{sec:architecture}) in place of their original autoencoder.
So the architecture, parameter count, and training length are identical across all three methods, and only the modeling choice (charts vs.\ GMM prior vs.\ single Gaussian prior) differs.

As noted in \autoref{sec:results}, we also attempted CAE~\citep{schonsheck2019chart} and MLMM-VAE~\citep{alberti2024manifold} but did not succeed in fitting either to the image-based datasets: one or two charts covered the whole dataset, while the remaining charts were effectively inactive. As a result these methods were only using a small subset of the available model-parameter capacity resulting in significantly higher FID when sampling new data from the model.
For that reason, we exclude them from the main comparison.

\section{Algorithm}
\label{app:algorithm}

\begin{algorithm}[ht]
\caption{Training procedure for \methodname. All $C$ encoders are evaluated for each minibatch to obtain the routing weights and per-chart losses; encoders and decoders are updated jointly.}
\label{alg:training}
\begin{algorithmic}[1]
\Require data $\{\mathbf{x}\}$, encoder--decoder pairs $\{(f_c, g_c)\}_{c=1}^C$, temperature $T$, weights $\beta, \gamma$
\While{not converged}
    \State sample a minibatch $\mathbf{x}$
    \For{$c = 1, \dots, C$}
        \State $(\boldsymbol{\mu}_c, \boldsymbol{\Sigma}_c) \gets f_c(\mathbf{x})$ \Comment{encode with every chart}
        \State $s_c \gets \log \mathcal{N}(\boldsymbol{\mu}_c;\, \mathbf{0},\, \mathbf{I} + \boldsymbol{\Sigma}_c)$ \Comment{latent evidence, \autoref{eq:prior-marginal-routing}}
        \State $\mathbf{z}_c \gets \boldsymbol{\mu}_c + \boldsymbol{\Sigma}_c^{1/2}\boldsymbol{\epsilon}$, $\boldsymbol{\epsilon} \sim \mathcal{N}(\mathbf{0}, \mathbf{I})$ \Comment{reparameterized sample}
        \State $\ell_c \gets \mathcal{L}_{\mathrm{recon}}(\mathbf{x}, g_c(\mathbf{z}_c)) + \beta\, D_{\mathrm{KL}}(q_\phi(\mathbf{z}_c \mid c, \mathbf{x}) \,\|\, p(\mathbf{z}_c \mid c))$ \Comment{per-chart loss, \autoref{eq:per-chart-loss}}
    \EndFor
    \State $q_\phi(c \mid \mathbf{x}) \gets \operatorname{softmax}_c\!\left(s_c / T + \log p(c)\right)$ \Comment{routing}
    \State $\ell \gets \sum_c q_\phi(c \mid \mathbf{x})\, \ell_c + \gamma\, D_{\mathrm{KL}}(q_\phi(c \mid \mathbf{x}) \,\|\, p(c))$ \Comment{weighted loss, \autoref{eq:weighted-loss}}
    \State update $\phi, \theta$ by gradient descent on $\ell$
\EndWhile
\end{algorithmic}
\end{algorithm}

\begin{algorithm}[ht]
\caption{Sampling from \methodname. Only the selected decoder is evaluated, so the cost is independent of $C$. With uniform $p(c)$ and standard normal priors, $\hat{p}(c) = 1/C$ and $(\mathbf{m}_c, \mathbf{S}_c) = (\mathbf{0}, \mathbf{I})$; the moment-matched values provide an optional correction described in \autoref{sec:sampling}.}
\label{alg:sampling}
\begin{algorithmic}[1]
\Require trained decoders $\{g_c\}_{c=1}^C$, chart weights $\hat{p}(c)$, sampling distributions $\{\tilde{p}(\mathbf{z}_c \mid c) = \mathcal{N}(\mathbf{m}_c, \mathbf{S}_c)\}_{c=1}^C$
\State $c \sim \hat{p}(c)$ \Comment{pick a chart}
\State $\mathbf{z}_c \sim \tilde{p}(\mathbf{z}_c \mid c)$ \Comment{sample its latent}
\State $\hat{\mathbf{x}} \gets g_c(\mathbf{z}_c)$ \Comment{decode}
\State \Return $\hat{\mathbf{x}}$
\end{algorithmic}
\end{algorithm}

Algorithm \autoref{alg:training} and \autoref{alg:sampling} summarize the training and sampling procedures. During training, every encoder is evaluated on each minibatch. The encoder outputs determine both the routing weights via the latent-evidence score of \autoref{eq:prior-marginal-routing} and, through a reparameterized sample, the per-chart losses of \autoref{eq:per-chart-loss}. These are combined into the weighted objective of \autoref{eq:weighted-loss}, and all encoders and decoders are updated jointly. Sampling requires only a chart index, a latent sample from that chart's prior, and a single decoder pass.

\section{Additional results}
\label{app:additional-results}

We present additional results for 10-chart variants of the variational atlas autoencoder trained on MNIST, Fashion-MNIST, and CelebA. \autoref{fig:reconstruction-comparison} depicts training samples and their reconstructions from the fully converged model. \autoref{fig:samples} depicts samples drawn from each of the 10 charts within each VAA.

\begin{table}[tbp]
\caption{Sample FID (mean $\pm$ std over seeds) by routing function, DCGAN backbone, $C=10$ charts, and iteration counts matched within each dataset. Lower is better; best in bold, second best underlined.}
\label{tab:routing_ablation_table}
\centering
\small
\begin{tabular}{lccccc}
\toprule
& & \multicolumn{4}{c}{Latent-evidence routing} \\
\cmidrule(lr){3-6}
Dataset & ELBO & $\mu$-based & Resampling & Expected log-prior & Prior-marginal \\
\midrule
MNIST & \textbf{11.80 $\pm$ 0.10} & 12.99 $\pm$ 0.22 & 13.79 $\pm$ 1.00 & 12.77 $\pm$ 0.33 & \underline{12.62 $\pm$ 0.53} \\
Fashion-MNIST & 43.64 $\pm$ 0.34 & \underline{38.42 $\pm$ 0.49} & 39.77 $\pm$ 1.46 & \textbf{38.03 $\pm$ 0.64} & 38.53 $\pm$ 0.45 \\
CelebA & 47.14 $\pm$ 0.19 & \textbf{43.46 $\pm$ 0.53} & 44.05 $\pm$ 0.24 & 43.89 $\pm$ 0.55 & \underline{43.69 $\pm$ 0.24} \\
\bottomrule
\end{tabular}
\end{table}

\paragraph{Choice of routing probability.}
\autoref{tab:routing_ablation_table} shows the effect of the routing function on the sample FID averaged over three seeds. The different variants of latent-evidence routing produce the best outcome in most cases. For our experiments, we choose prior-marginal routing, but we find that both $\mu$-based and expected log-prior routing produce comparable results.

\paragraph{Reconstruction error.}
\autoref{fig:reconstruction-comparison} depicts additional reconstruction samples computed with our 10-chart VAA and compares them with those from the capacity-matched VAE.

\begin{figure}[t]
    \centering
    \begin{subfigure}[b]{\textwidth}
        \centering
        \includegraphics[width=\linewidth]{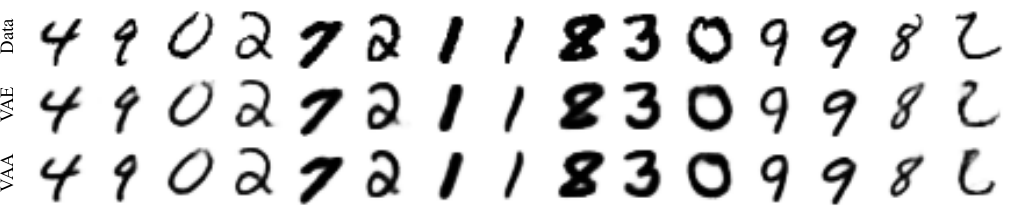}
        \subcaption{MNIST}
        \label{fig:mnist-recon}
    \end{subfigure}
    \begin{subfigure}[b]{\textwidth}
        \centering
        \includegraphics[width=\linewidth]{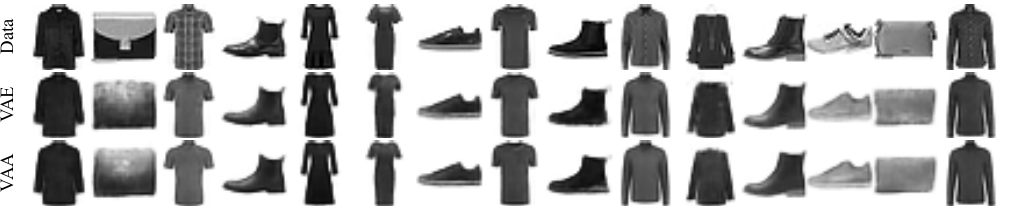}
        \subcaption{Fashion-MNIST}
        \label{fig:fashion-recon}
    \end{subfigure}
    \begin{subfigure}[b]{\textwidth}
        \centering
        \includegraphics[width=\linewidth]{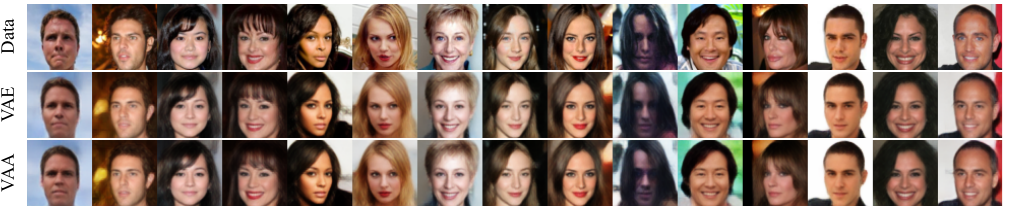}
        \subcaption{CelebA}
        \label{fig:celeba-recon}
    \end{subfigure}
    \caption{
        10-chart \methodname (ours) and 1-chart capacity-matched VAE reconstructions of randomly selected held-out inputs from MNIST, Fashion-MNIST and CelebA.
    }
    \label{fig:reconstruction-comparison}
\end{figure}

\begin{figure}[t]
    \centering
    \includegraphics[width=\linewidth]{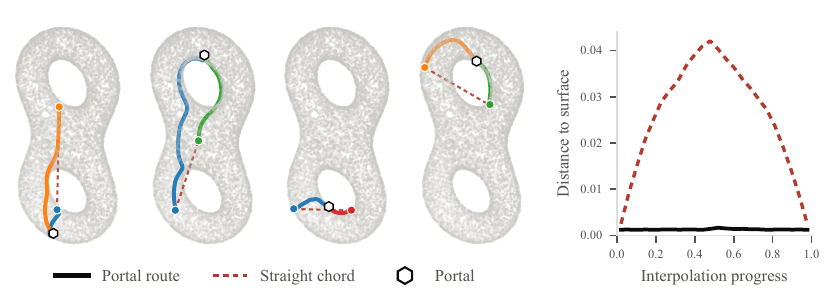}
    \caption{
        Interpolation trajectories between two charts through a portal on the double-torus geometry.
        For any two points on the torus, we can create interpolation trajectories in our charts' latent spaces that remain close to the torus surface (the data manifold).
        The red line illustrates straight-line interpolation through a global latent space, which leaves the surface and thus passes through implausible points.
        Optimizing portal placement to select shorter paths is left as future work.
    }
    \label{fig:double-torus-interpolation-trajectories}
\end{figure}

\paragraph{Generated samples.}

\autoref{fig:samples} includes additional samples generated from each of our charts and from a capacity-matched vanilla VAE baseline.
Samples are drawn from each chart's prior at random. Sampling from our VAA model, only requires the sampled chart's weights to be active, resulting in faster sampling compared to our baselines which need all the model parameters.

\begin{figure}[p]
    \centering
    \begin{subfigure}[b]{\textwidth}
        \centering
        \includegraphics[width=\linewidth]{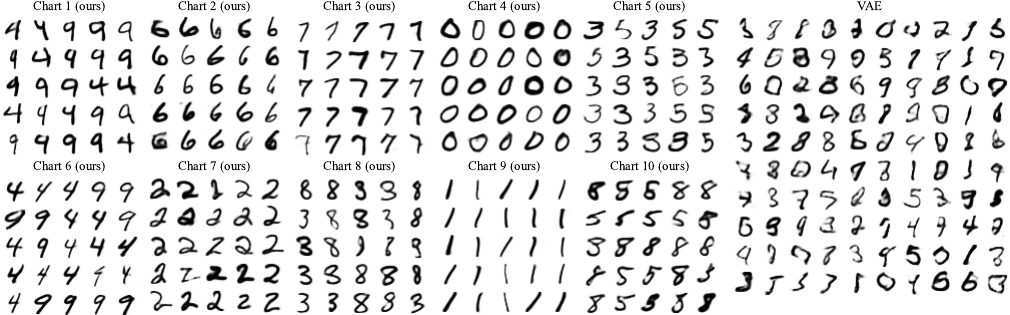}
        \subcaption{MNIST: 10 charts}
        \label{fig:mnist-samples}
    \end{subfigure}
    \begin{subfigure}[b]{\textwidth}
        \centering
        \includegraphics[width=\linewidth]{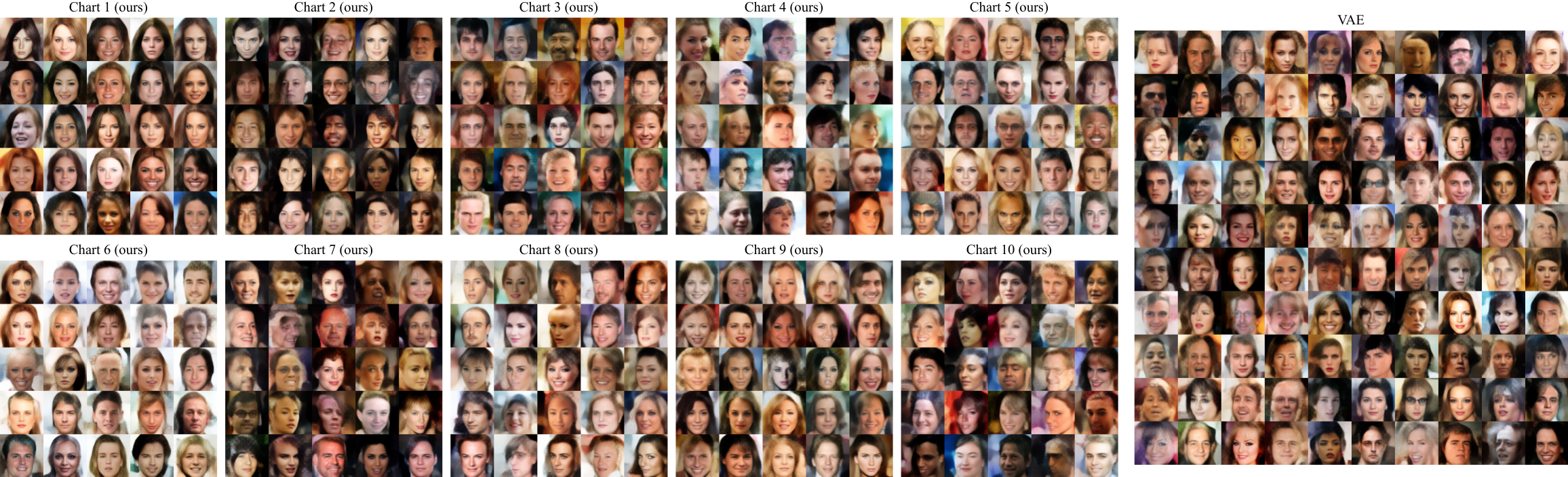}
        \subcaption{CelebA: 10 charts}
        \label{fig:celeba-samples}
    \end{subfigure}
     \begin{subfigure}[b]{\textwidth}
        \centering
        \includegraphics[width=\linewidth]{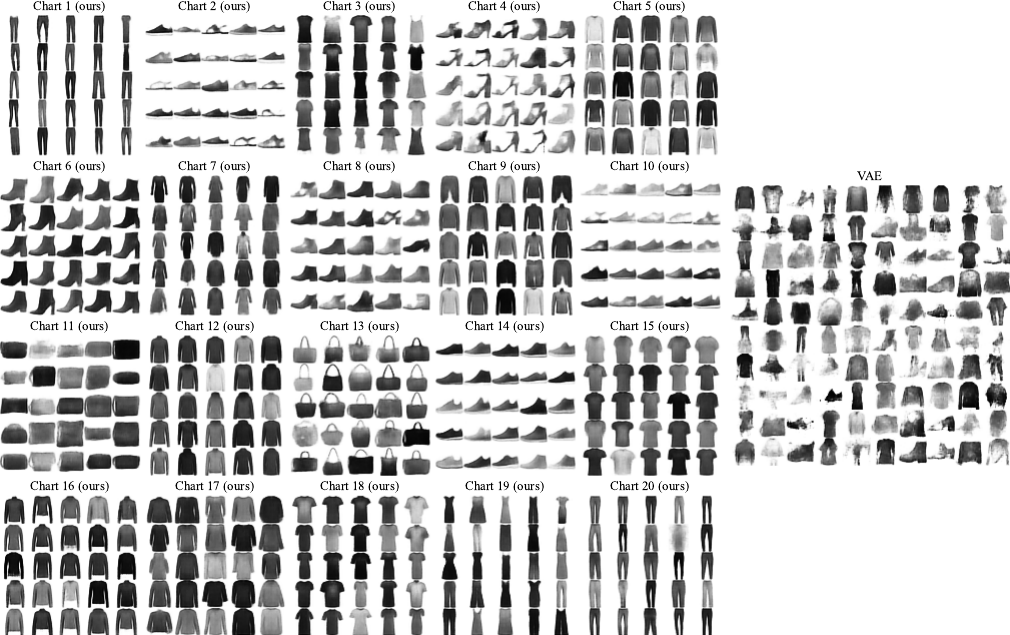}
        \subcaption{Fashion-MNIST: 20 charts}
        \label{fig:fashion-samples}
    \end{subfigure}
    \caption{
        Sample images drawn from 10-, 10-, and 20-chart VAAs trained on MNIST, CelebA, and Fashion-MNIST, respectively.
        Each grid represents samples generated from a distinct chart.
        Samples from a capacity-matched VAE are shown on the right.
        \label{fig:samples}
    }
\end{figure}

\paragraph{Geometric interpolation results.}

\autoref{fig:double-torus-interpolation-trajectories} shows interpolation trajectories over the double torus, including paths across chart boundaries through portals.

\end{document}